\newcommand{\red}[1]{{\color{red} #1}}
\newtheorem{assumption}{Assumption}
\newtheorem{proposition}{Proposition}
\DeclareMathOperator{\expit}{expit}
\newcommand{\algorithmfootnote}[2][\footnotesize]{%
  \let\old@algocf@finish\@algocf@finish
  \def\@algocf@finish{\old@algocf@finish
    \leavevmode\rlap{\begin{minipage}{\linewidth}
    #1#2
    \end{minipage}}%
  }%
}
\title{RISE: Robust Individualized Decision Learning \\with Sensitive Variables}
\author{%
   Xiaoqing Tan \\
   University of Pittsburgh\\
   \texttt{xit31@pitt.edu} \\
   \And
   Zhengling Qi \\
   George Washington University \\
   \texttt{qizhengling@email.gwu.edu} \\
   \AND
   Christopher W. Seymour \\
   University of Pittsburgh \\
   \texttt{seymourc@pitt.edu} \\
   \And
   Lu Tang\thanks{Corresponding author.} \\
   University of Pittsburgh \\
   \texttt{lutang@pitt.edu} \\
}
\begin{document}

\maketitle


\begin{abstract}
This paper introduces RISE, a \underline{r}obust \underline{i}ndividualized decision learning framework with \underline{se}nsitive variables, where sensitive variables are collectible data and important to the intervention decision, but their inclusion in decision making is prohibited due to reasons such as delayed availability or fairness concerns. {A naive baseline} is to ignore these sensitive variables in learning decision rules, leading to significant uncertainty and bias. 
To address this, we propose a decision learning framework to incorporate sensitive variables during \textit{offline} training but not include them in the input of the learned decision rule during model deployment. 
Specifically, from a causal perspective, the proposed framework intends to improve the worst-case outcomes of individuals caused by sensitive variables that are unavailable at the time of decision. 
Unlike most existing literature that uses mean-optimal objectives, we propose a robust learning framework by finding a newly defined quantile- or infimum-optimal decision rule. 
The reliable performance of the proposed method is demonstrated through synthetic experiments and three real-world applications. 
\end{abstract}

\section{Introduction}\label{sec:intro}
 
Recently, there has been a widespread interest in developing methodology for individualized decision rules (IDRs) based on observational data. 
When deriving IDRs, some collectible data are important to the intervention decision, while their inclusion in decision making is prohibited due to reasons such as delayed availability or fairness concerns. 
For example, sensitive characteristics of subjects regarding their income, sex, race and ethnicity may not be appropriate to be used directly for decision making due to fairness concerns. 
In the medical field especially for patients in severe life-threatening conditions such as sepsis, timely bedside intervention decisions have to be made before lab measurements are ordered, assayed and returned to the attending physicians. However, due to the delayed availability of lab results, most of the decisions are made with great uncertainty and bias due to partial information at hand. 
We define \textit{sensitive variables} as variables whose inclusion into decision rules is prohibited. 
The formal definition of sensitive variables will be given in Section~\ref{ssec:method}.

In this work, we propose RISE (\textbf{R}obust \textbf{I}ndividualized decision learning with \textbf{SE}nsitive variables)\footnote[1]{Python code is available at \url{https://github.com/ellenxtan/rise}.}, a robust IDR framework to improve the outcome of individuals when there are informative yet sensitive variables that are either not available or prohibited from using during IDR deployment. 
To achieve this, we propose to estimate the optimal IDR by optimizing a quantile- or infimum-based objective, respectively, for continuous or discrete sensitive variables. This optimization problem is then shown to be equivalent to a weighted classification problem where \textit{most existing machine learning classifiers can be readily applied}.
Our idea falls along the lines of work that considers algorithmic fairness \citep{dwork2012fairness} while extending it to the setting of causal inference \citep{rubin2005causal} in the sense that decisions are driven by causality rather than a general utility function.
We show in our empirical analyses that this leads to fairer and safer real-life decisions with little sacrifice of the overall performance.

Assuming that a larger outcome value is preferable, optimal IDRs are traditionally derived through maximizing the mean outcome of the sample population. 
In this paper, we are interested in a specific yet broadly applicable setting of learning that involves sensitive variables. 
We consider offline learning where sensitive variables are collected and \textit{can be used in training the IDRs}, but they \textit{cannot be used as input in the resulting IDRs}. 
This is a setting commonly considered in the fairness and privacy-related literature for classification (e.g., \cite{kamiran2012data}), but not from a causal standpoint.
When there exist important variables that are simply left out from training, the estimated IDR will be biased.
This bias can be removed if all important variables are used during training, which we will show in Section~\ref{sec:assump} a mean-optimal approach.
The optimal action maximizes the mean outcome where the mean is taken over the sensitive variables, conditioning on other variables.
This method, however, has no control of the disparity in sensitive variables. Subjects with different sensitive values may report large outcome differences, hence unfairly or unsafely treated. 
Therefore, objective functions with robustness guarantees for sensitive variables are preferred, since they offer protection to subjects in the lower tail of the outcome distribution with regards to the sensitive values.

For illustration, we consider a toy example with binary actions, $A \in \{-1, 1\}$. 
We remark that the decision can only be made based on the variable $X$ whereas $S$ is a sensitive variable.
The setup is shown in Table~\ref{t:toy_setup} and the oracle values under the mean-optimal rule and RISE\footnote{For the mean-optimal rule, overall average reward is calculated by $(30+0+5+27)/4=15.5$, reward among vulnerable subjects is calculated by $(0+5)/2=2.5$; for RISE, overall average reward is calculated by $(11+13+15+13)/4=13$, reward among vulnerable subjects is calculated by $(13+15)/2=14$.}
are given in Table~\ref{t:toy_res}. 
The detailed setup can be found in Section~\ref{ssec:simulation} under Example 1. 
We consider \textit{vulnerable subjects} as those with low outcome values, as highlighted in red in Table~\ref{t:toy_setup} (A full definition is given in Section~\ref{ssec:vul}). 
For $X \leq 0.5$, the mean-optimal rule would assign action $A=1$ as it tries to achieve the largest average reward across $S=0$ and $S=1${. Recall that $S$ is not available at the time of decision}. However, this action results in great harm for subjects with $S=1$ as they could get the worst expected outcome of 0. On the contrary, RISE improves the worst-case outcome by assigning $A=-1$, protecting the vulnerable subjects. 
Likewise, for $X > 0.5$, the mean-optimal rule assigns $A=-1$ while RISE assigns $A=1$ protecting those with $S=0$ that could have experienced an outcome of $5$. 
Compared to the mean-optimal rule, the proposed rule achieves a larger reward among vulnerable subjects while maintaining a comparable overall reward.

\begin{table}[!htb]
\centering
\resizebox{0.99\columnwidth}{!}{
    \begin{minipage}{.53\textwidth}
      \centering
\caption{Toy example setup {of $E(Y|X,S,A)$.}}
\label{t:toy_setup}
\begin{small}
\begin{tabular}{@{}cccccc@{}}
\toprule
 & \multicolumn{2}{c}{$X \leq 0.5$} & & \multicolumn{2}{c}{$X > 0.5$} \\ \cmidrule(l){2-3} \cmidrule(l){5-6} 
 & $S=0$ & $S=1$ & & $S=0$ & $S=1$ \\ \midrule
$A=-1$ & {11} & 13 & & \red{5} & 27 \\
$A=1$ & 30 & \red{0} & & 15 & {13} \\ \bottomrule
\end{tabular}
\end{small}
    \end{minipage}\hfill
    \begin{minipage}{.46\textwidth}
      \centering
\caption{Toy example results.}
\label{t:toy_res}
\begin{small}
\begin{tabular}{@{}ccc@{}}
\toprule
  & \multicolumn{2}{c}{Average reward} \\ \cmidrule(l){2-3}  
 & Overall & Vulnerable \\ \midrule
Mean-optimal rule & {\textbf{15.5}} & {2.5} \\
RISE &  13 & \textbf{14} \\ \bottomrule
\end{tabular}
\end{small}
    \end{minipage}
}
\end{table}

\vspace{0.5cm}
\textbf{Main contributions. } 
\textit{Methodology-wise,}
\textit{1)} we propose a novel framework, RISE, to handle sensitive variables in causality-driven decision making.
Robustness is introduced to improve the worst-case outcome caused by sensitive variables, and as a result, it reduces the outcome variation across subjects. The latter is directly associated with fairness and safety in decision making. 
To the best of our knowledge, we are among the first to propose a robust-type fairness criterion under causal inference.
\textit{2)} We introduce a classification-based optimization framework that can easily leverage most existing classification tools catered to different functional classes, including state-of-the-art random forest, boosting, or neural network models.
\textit{Application-wise,} 
\textit{3)} the consideration of sensitive variables in decision learning is important to applications in policy, education, healthcare, etc. 
Specifically, we illustrate the application of RISE using three real-world examples from fairness and safety perspectives where robust decision rules are needed, across which we have observed robust performance of the proposed approach. 
From a fairness perspective, we consider a job training program where age is considered as a sensitive variable. 
From a safety perspective, we consider two applications to healthcare where lab measurements are considered as sensitive variables.

\section{Related Work }\label{ssec:related}

Our work focuses on individualized decision rules, which aim at assigning treatment decision based on subject characteristics. 
Existing methods for deriving IDRs include model-based methods such as Q-learning \citep{watkins1992q,murphy2003optimal,moodie2007demystifying} and A-learning \citep{robins2000marginal,shi2018high}, model-free policy search methods \cite{zhang2012robust,zhao2012estimating,zhao2015doubly}, and contextual bandit methods \cite{bietti2021contextual,li2011unbiased}. 
In Appendix~\ref{suppl:lit}, we provide additional literature review on general IDRs under causal settings. 
Fairness, safety and robustness are topics of interest that extend well beyond causal inference. In the following, we provide a review of these areas, with focus given to work related to causal inference and IDRs. 

\textbf{Fairness and safety in IDRs. } 
The consideration of fairness and safety in machine learning has seen an explosion of interest in the past few years \cite{dwork2012fairness,varshney2016engineering,barocas2017fairness,nabi2018fair,hashimoto2018fairness,chouldechova2020snapshot,mehrabi2021survey,pessach2022review,pmlr-v162-tan22a}, especially for solving classification and regression problems to help derive decisions that are not only accurate but also fair.
In these work, sensitive variables are also referred to as sensitive, protected, or auxiliary attributes. 
We extend the definition of sensitive variables to include delayed information that is not available at deployment as it is also suitable for this framework.

Among earlier work, preprocessing \citep{kamiran2012data,feldman2015certifying,creager2019flexibly,sattigeri2019fairness} 
and inprocess training approaches \citep{beutel2017data,hashimoto2018fairness,lahoti2020fairness} 
consider disentangling the input $X$ from a known or unknown sensitive variable $S$ so that the transformed $X$ does not contain any information related to $S$.
Due to the causal nature of IDRs, effect of IDRs cannot be estimated consistently when an informative $S$ is left out and the resulting rule is suboptimal.
This follows from the classic argument that any unmeasured confounding (i.e., $S$), if not accounted for, would lead to bias.
Similar issues persist in contextual bandits \citep{joseph2016fairness,patil2020achieving}.
\cite{makar2022causally} considers reducing the impact of auxiliary variables on prediction under distributional shift. Although it is motivated from a causal idea, its main focus is still on prediction.
Inside the causal framework,
\cite{zhang2018fairness,nabi2019learning} extend fairness from prediction to policy learning using causal graphical models by incorporating fairness constraints.
\cite{chen2022learning} considers counterfactual fairness that seeks to achieve conditional independence of the decisions via data preprocessing.
Despite earlier efforts in bringing fairness into the causal framework, most of these approaches only ensure mean zero disparity in $S$ but do not have robustness guarantees in the sense that the variance of the disparity in $S$ is not controlled. 
Besides, most examples consider a single categorical sensitive variable, but not multiple or continuous ones.

\textbf{Robustness in IDRs. }
Recently the statistical literature has witnessed a growing interest in developing robust methods for estimating IDRs. They introduce robustness into the objective function by using quantile-optimal treatment regimes or mean-optimal treatment regimes under certain constraints to improve the gain of individuals at the lower tail of the reward spectrum \citep{wang2018quantile,wang2018learning,qi2019estimating,qi2019estimation,fang2021fairness}. 
In particular, \cite{wang2018quantile,qi2019estimating} propose to estimate quantile- or tail-optimal treatment regimes. 
\cite{wang2018learning} studies the mean-optimal treatment regime under a constraint to control for the average potential risk. 
\cite{qi2019estimation} proposes a decision-rule-based optimized covariates dependent equivalent for tasks of  individualized decision making. 
\cite{fang2021fairness} considers mean and quantile objectives simultaneously by maximizing the average value with the guarantee that its tail performance exceeds a prespecified threshold. 
Robustness, in their sense, pertains to the outcome distribution subject to the sampling error.
When sensitive variables are present, we consider instead the robustness of the outcome distribution subject to the uncertainty due to sensitive variables, providing a more targeted way of ensuring robustness, which is directly related to fairness and safety.
Compared to algorithms based on explicit fairness constraints (for example \cite{zafar2017fairness,zhang2018mitigating} in classification and \cite{zhang2018fairness,chen2022learning} in causal inference) that seek to remove the disparity across different values of $S$, our method reduces the variance of disparity across $S$. In addition, constraint-based approaches typically require specialized optimization procedures whereas our approach presents an elegant and systematic way for optimization. 
To our knowledge, we are the first few to consider decision fairness via a robust objective under the causal framework.

\section{Robust Decision Learning Framework with Sensitive Variables}\label{ssec:method}

\subsection{Preliminaries}\label{sec:assump}

\textbf{Notation. } We let random variables be represented by upper-case letters, and their realizations be represented by lower-case letters. 
Suppose there are $n$ independent subjects sampled from a given population. For subject $i$, 
let $A_i \in \{-1,1\}$ denote a binary treatment assignment and $Y_i$ denote the corresponding outcome. Without loss of generality, we assume a larger value of outcome is desirable. 
Under the potential outcomes framework \citep{rubin1978bayesian,splawa1990application}, let $Y_i(a)$ be the potential outcome had the subject been assigned to $A = a$ for $a = 1$ or $-1$. 
Let $X_i \in \mathbb{X}$ be the feature vector and, for now, $S_i$ be a single sensitive variable. 
We consider $S \in \mathbb{S}$ where $\mathbb{S} = \{1,\ldots,K\}$ if $S$ is discrete and $\mathbb{S} = \mathbb{R}$ if $S$ is continuous. 
The extension to multiple sensitive variables is presented in Section~\ref{sec:exten}. 

\textbf{Definition of sensitive variables. } We define sensitive variables $S$ as variables that are important to the intervention decision, but their inclusion in decision making is prohibited. 
Formally, consider variables $X$ and $S$ that are both available during model training and are both determinants of conditional average treatment effect \citep{rubin1974estimating}. 
While $S$ may be involved in training, the derived decision rule $d(\cdot)$ precludes the input of $S$ due to sensitive concerns. Hence, the derived IDR is a function of the form $d(X): \mathbb{X} \mapsto \mathbb{A}$.
Following the above definition, we consider an offline learning framework where sensitive variables are collected and can be used in obtaining the IDRs, but they cannot be used in the resulting IDRs. 
A causal diagram and a decision diagram are provided in Figure~\ref{fig:diagram}.
As it shows in Figure~\ref{fig:diagram}a, both $X$ and $S$ confound the effect of treatment $A$ on outcome $Y$. The arrows represent the causal relationship between variables. Note that $X$ and $S$ can be correlated.  
This causal diagram is formalized in Assumption~\ref{ass: causal} below.
On the other hand in Figure~\ref{fig:diagram}b, in the decision diagram under our setting, $S$ is shown in a dotted circle as $S$ may not be readily available at the time of decision making. We connect $S$ and $A$ with a dotted arrow to indicate that $S$ is incorporated in the training of the decision rule, but it is not required at deployment.

\vspace{1em}
\begin{figure}[!htb]
\centering
 \begin{subfigure}{0.25\textwidth}
  \centerline{\includegraphics[width=\linewidth]{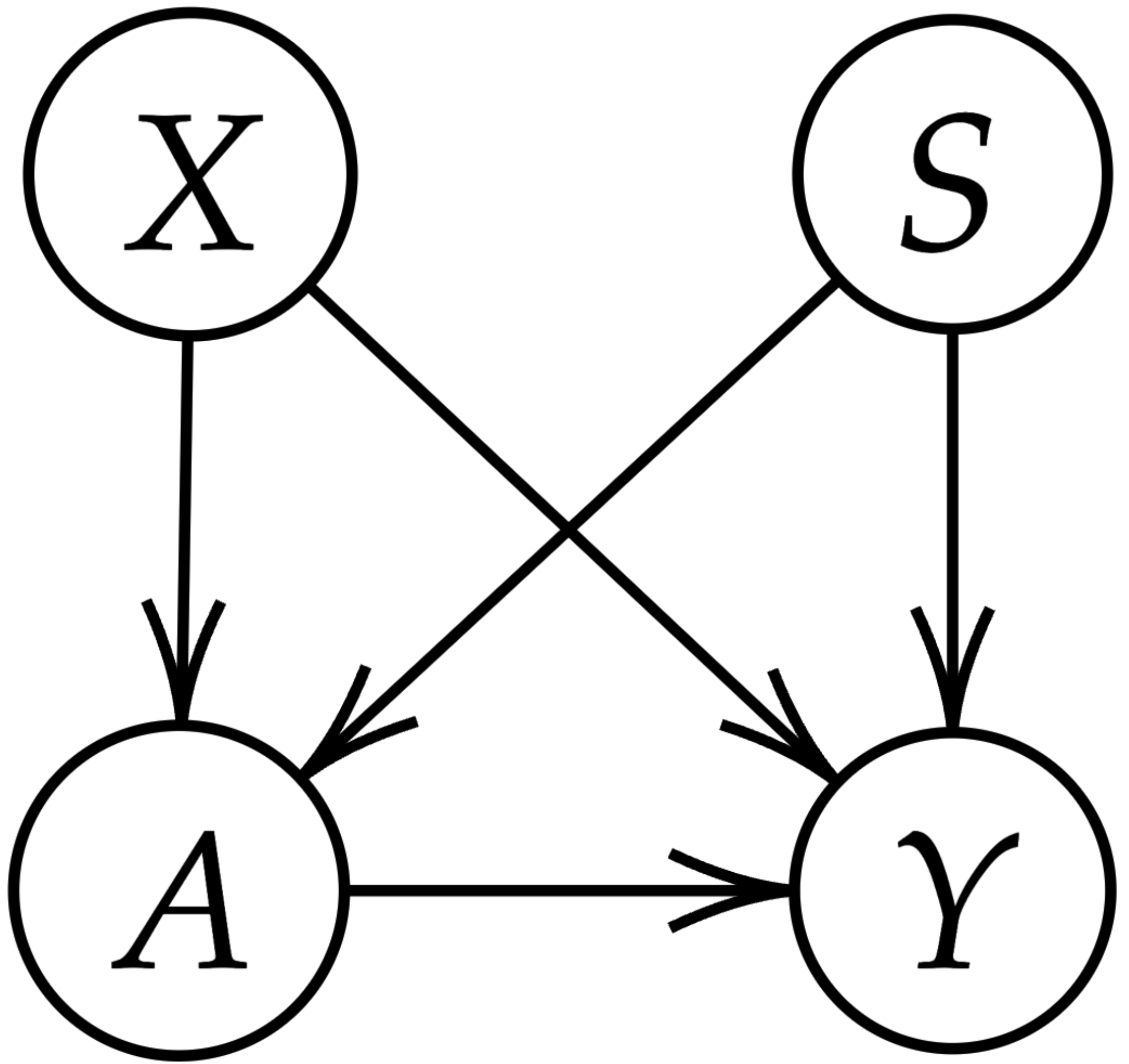}}
  \caption{}
 \end{subfigure}
 \qquad
 \begin{subfigure}{0.25\textwidth}
  \centerline{\includegraphics[width=\linewidth]{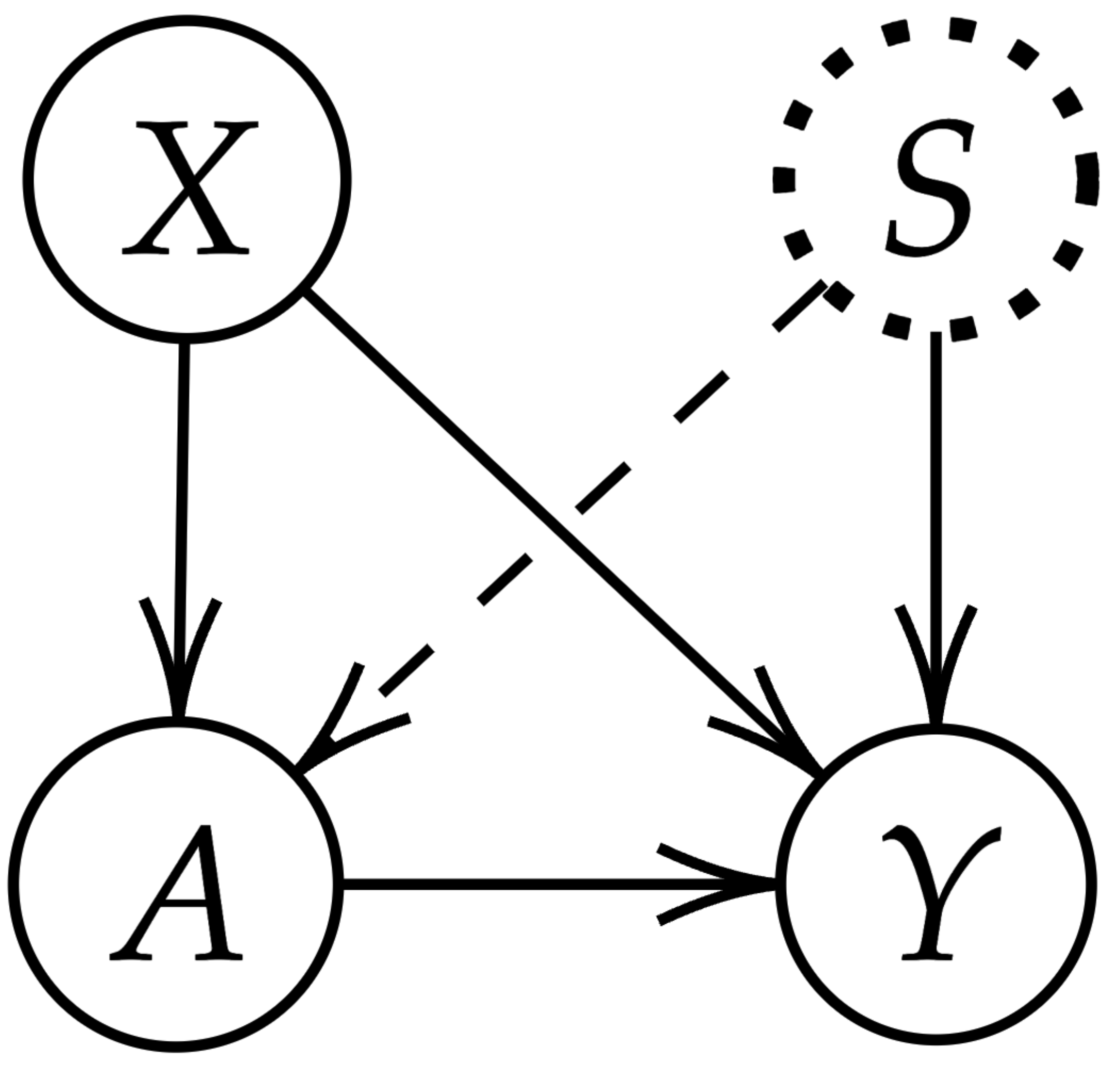}}
  \caption{}
 \end{subfigure}
 \caption{
 {
 (a) A causal diagram.  
 (b) A decision diagram. 
 }} 
 \label{fig:diagram}
\end{figure}

\vspace{1em}

\begin{assumption}\label{ass: causal}
    Assume the following conditions hold:
    \begin{enumerate}[label=1\alph*,topsep=0pt,leftmargin=*]
        \itemsep0em 
        \item Consistency: $Y=Y(-1)\mathbbm{1}(A=-1)+Y(1)\mathbbm{1}(A=1)$.
        \item Positivity: $0 < Pr(A=1 | X,S) < 1$.
        \item Unconfoundedness: $\{Y(-1), Y(1)\} \perp A | \{X,S\}$ and $\{Y(-1), Y(1)\} \not\perp A | X$. 
    \end{enumerate}
\end{assumption}

Assumption~\ref{ass: causal}a is the standard consistency assumption in causal inference and 
Assumption~\ref{ass: causal}b states that every subject has a nonzero probability of getting the treatment. 
Assumption~\ref{ass: causal}c states that given $X$ and $S$, the potential outcomes are independent of the treatment assignments. Besides, unconfoundedness does not hold when only $X$ is given, signifying the important role of $S$. 
Under causal settings, Assumption~\ref{ass: causal}c implies that treatment effects cannot be non-parametrically identifiable without $S$ \citep{neyman1923applications,rubin1974estimating}. Approaches such as ones that disentangle $X$ from $S$ under supervised learning settings mentioned in Section~\ref{ssec:related} will introduce bias towards estimating IDRs.

\textbf{Naive approaches that omit sensitive variables. } When $S$ is not available for future deployment, a naive approach is to maximize $E_{X} \{E(Y|X,A=d(X))\}$ over $d$ using $(X, A, Y)$ during the training procedure. This approach introduces bias in the estimation of conditional treatment effect and leads to a suboptimal IDR due to the unmeasured confounder $S$.

\textbf{Mean-optimal approaches that use the sensitive variables. } It is thus important that one includes $S$ into the training procedure. For example, if we consider the value function framework (i.e., expected outcome) used by most existing works such as \cite{manski2004statistical,qian2011performance}, we can show that 
\begin{align}\label{eq:obj_mean}
    E\{Y(d)\} 
    &= E_{X,S} \big[ E(Y(d)|X,S) \big] = E_{X} \big[ E_{S|X} \{ E(Y(d)|X,S) \}   \big]  \\
    &= E_{X} \big[ E_{S|X}\{E(Y|X,S,A=d(X))\}   \big] \neq E_{X} \big[ E(Y|X,A=d(X))\big], \nonumber
\end{align}
where the third equality in \eqref{eq:obj_mean} holds by Assumption \ref{ass: causal} and the last inequality also indicates the naive approaches without using $S$ will in general fail. Then one valid approach is to maximize $E_{X} \big[ E_{S|X}\{E(Y|X,S,A=d(X))\}   \big]$ over $d$ using $(X, S, A, Y)$. The optimal IDR under this criterion is, for every $X \in \mathbb{X}$,
\begin{align*}
    \tilde{d}(X) \in \operatorname{sgn}(E_{S|X}\{E(Y|X,S,A=1)\} - E_{S|X}\{E(Y|X,S,A=-1)\}), 
\end{align*}
which finds the treatment that maximizes the conditional expected outcome given $X$ by averaging out the effect of the sensitive variable $S$. 
Mean-optimal approaches, however, fail to control the disparities across realizations of the sensitive variables due to the integration over $S$, which may lead to unsatisfactory decisions to certain subgroups, as illustrated in the toy example in Section~\ref{sec:intro}.

\subsection{Robust Optimality with Sensitive Variables}\label{ssec:vul}

Driven by the limitation of existing approaches, our goal is to derive a robust decision rule that maximizes the worst-case scenarios of subjects when some sensitive information is not available at the time of deploying the decision rule. Specifically, our robust decision learning framework draws decisions based on individuals' available characteristics summarized in the vector $X$ without the sensitive variable $S$, while improving the worst-case outcome of subjects in terms of the sensitive variable in the population. Formally, given a collection $\mathbb{D}$ of all treatment decision rules depending only on $X$, the proposed RISE approach estimates the following IDR, which is defined as 
\begin{align}\label{eq:obj_max} 
    d^\ast \in {\arg\max}_{d\in\mathbb{D}} E_X \big[ G_{S|X} \{E (Y|X,S,A=d(X)) \} \big],
\end{align}
where $G_{S|X}(\cdot)$ could be chosen as some risk measure for evaluating $E (Y|X,S,A=d(X))$ for each $S \in \mathbb{S}$. Examples include variance, conditional value at risk, quantiles, etc. In this paper, we consider $G_{S|X}$ as the conditional quantiles (for a continuous $S$) or the infimum (for a discrete $S$) over $\mathbb{S}$. 

Specifically, for a discrete $S$, $G_{S|X}$
is considered as an infimum operator of $E(Y|X,S,A=d(X))$ over $S$. We thus aim to find
$$
d^\ast \in {\arg\max}_{{d\in\mathbb{D}}} E_X\big[\operatorname{inf}_{s \in \mathbb{S}} \{E (Y|X,S=s,A=d(X)) \} \big]
,$$
where $\operatorname{inf}$ is the infimum taken with respect to
$E (Y|X,s,A=d(X))$ over $s \in \mathbb{S}$. This implies that for a given $X$, $d^\ast(X)$ assigns the treatment that yields the best worst-case scenario among all possible values of $S$ for every $X \in \mathbb{X}$, or equivalently,
$$
d^\ast(X) \in \operatorname{sgn}(\operatorname{inf}_{s \in \mathbb{S}} \{E (Y|X,S=s,A=1) - \operatorname{inf}_{s \in \mathbb{S}} \{E (Y|X,S=s,A=-1)\}).
$$

For a continuous $S$, we consider $G_{S|X} \{E (Y|X,S,A=d(X)) \}$ as $Q_{S|X}^{\tau} \{E (Y|X,S,A=d(X)) \},$
which is the $\tau$-th quantile of $E (Y|X,S,A=d(X))$ and $\tau \in (0,1)$ is the quantile level of interest. 
Specifically, $Q_{S|X}^{\tau} \{E (Y|X,S,A=d(X)) \} = \inf\{ t: F(t) \geq \tau\}$ with $F$ denoting the conditional distribution function of $E (Y|X,S,A=d(X))$ over $\mathbb{S}$ given $X$ and $d$. Note the randomness behind $E (Y|X,S,A=d(X))$ given $X$ and $d$ is fully determined by the sensitive variable $S$. Then optimal IDR under this criterion is defined as
$$
d^\ast \in {\arg\max}_{\mathbb{D}} E_X\big[Q_{S|X}^{\tau} \{E (Y|X,S,A=d(X)) \} \big]
.$$
This implies that for a given $X$, $d^\ast(X)$ assigns a treatment that yields the largest $\tau$-th quantile of the outcome over the distribution related to $S$, or equivalently,
$$
d^\ast(X) \in \operatorname{sgn}( \{Q_{S|X}^{\tau} \{E (Y|X,S,A=1) \}  - Q_{S|X}^{\tau} \{E (Y|X,S,A=-1) \} ).
$$
We let $\tau=0.25$ throughout the paper and suppress $\tau$ for simplicity. Results on varying the value of $\tau$ is provided in Appendix~\ref{suppl:sim}; see Section~\ref{ssec:simulation} for details.

\textbf{Identifying vulnerable subjects. } Our RISE framework provides a natural way to define \textit{vulnerable groups}. 
Specifically, for a discrete $S$, if $\inf_S \{E(Y | X,S, A = 1)\} > \inf_S \{E(Y | X,S, A = 0)\}$, then $\arg \inf_S \{E(Y | X,S, A = 0)\}$ is vulnerable given $X$, otherwise $\arg \inf_S \{E(Y | X,S, A = 1)\}$ is vulnerable. 
In other words, the vulnerable subjects are those in the worst-off group that needs protection.
Similarly, for a continuous $S$, if $Q_S \{E(Y | X,S, A = 1)\} > Q_S \{E(Y | X,S, A = 0)\}$, then the set $\{S: E(Y | X,S, A = 0) \leq Q_S \{E(Y | X,S, A = 0)\} \}$ defines the vulnerable subjects given $X$, otherwise this group is defined as $\{S: E(Y | X,S, A = 1) \leq Q_S \{E(Y | X,S, A = 1)\} \}$.

\subsection{Estimation and Algorithm}\label{sec:diff_g}

Here we provide a transformation of the proposed RISE from an optimization problem to a weighted classification problem. There are several advantages to this conversion: 
1) The optimization problem defined in \eqref{eq:obj_max} involves a nonsmooth and nonconvex objective function that could lead to computational challenges. 
2) With multiple powerful statistical and machine learning toolbox to choose from, a classification problem can be more readily solved in practice. Hyperparameter tuning and model selection could be conducted to further boost performance. 
3) Compared to a direct optimization of \eqref{eq:obj_max}, a classification-based optimizer allows the use of off-the-shelf software packages that can be tailored to different functional classes or incorporate different properties such as model sparsity.

\begin{proposition}\label{prop:obj}
    Maximizing the objective function in~\eqref{eq:obj_max} is equivalent to maximizing $$E_X \big\{ \mathbbm{1}(d(X) = 1) [G_{S|X} \{E (Y|X,S,A=1) \}- G_{S|X} \{E (Y|X,S,A=-1) \}] \big\}.$$
\end{proposition}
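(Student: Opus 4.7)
The plan is to exploit the fact that $d(X) \in \{-1,1\}$ depends only on $X$, so conditionally on $X$ the value $d(X)$ is a deterministic scalar. This lets us split the objective into two mutually exclusive terms indexed by the two possible actions, after which the statement follows by algebraic rearrangement.

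First I would condition on $X$ and observe that the random quantity inside $G_{S|X}$ is $E(Y\mid X,S,A=d(X))$, viewed as a function of $S$ with $X$ fixed. Because $d(X)$ is $\sigma(X)$-measurable, for each fixed $X$ it equals either $1$ or $-1$, and therefore as functions of $S$,
\begin{equation*}
E(Y\mid X,S,A=d(X)) = \mathbbm{1}(d(X)=1)\,E(Y\mid X,S,A=1) + \mathbbm{1}(d(X)=-1)\,E(Y\mid X,S,A=-1).
\end{equation*}
Since exactly one indicator is active at each $X$, the operator $G_{S|X}$ (whether the infimum over $\mathbb{S}$ or the $\tau$-th quantile) is applied to a single one of the two conditional mean functions at a time. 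Hence
\begin{equation*}
G_{S|X}\{E(Y\mid X,S,A=d(X))\} = \mathbbm{1}(d(X)=1)\,G_{S|X}\{E(Y\mid X,S,A=1)\} + \mathbbm{1}(d(X)=-1)\,G_{S|X}\{E(Y\mid X,S,A=-1)\}.
\end{equation*}
This ``pulling the indicator through $G_{S|X}$'' step is the key point, and it is the one that might look suspicious at first since $G_{S|X}$ is generally nonlinear; the justification is precisely that $d(X)$ is constant in $S$ given $X$.

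Next I would take $E_X$ on both sides and use $\mathbbm{1}(d(X)=-1) = 1 - \mathbbm{1}(d(X)=1)$ to obtain
\begin{equation*}
E_X\bigl[G_{S|X}\{E(Y\mid X,S,A=d(X))\}\bigr] = E_X\bigl\{\mathbbm{1}(d(X)=1)\bigl[G_{S|X}\{E(Y\mid X,S,A=1)\} - G_{S|X}\{E(Y\mid X,S,A=-1)\}\bigr]\bigr\} + C,
\end{equation*}
where $C := E_X[G_{S|X}\{E(Y\mid X,S,A=-1)\}]$ does not depend on $d$. Maximizing over $d \in \mathbb{D}$ on the left is therefore equivalent to maximizing the first term on the right, which is exactly the claimed classification-type objective.

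The main obstacle is conceptual rather than technical: one must argue cleanly that the nonlinear operator $G_{S|X}$ commutes with the indicator $\mathbbm{1}(d(X)=a)$ in the manner used above. Once one recognizes that $d$ is restricted to $\mathbb{D}$, i.e., functions of $X$ only, this reduces to the trivial observation that a constant multiplier in front of a function of $S$ either selects that function or gives the zero function, and $G_{S|X}$ applied to the two cases reproduces the decomposition displayed. No regularity on $G_{S|X}$ beyond being well-defined on each conditional mean function is needed, so the same argument covers both the discrete-infimum and continuous-quantile instances discussed in Section~\ref{ssec:vul}.
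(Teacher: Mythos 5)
Your argument is correct and is essentially the same as the paper's: condition on $X$, case-split on the value of $d(X)$ to write $G_{S|X}\{E(Y|X,S,A=d(X))\}$ as the indicator-weighted sum of the two arm-specific terms, then use $\mathbbm{1}(d(X)=-1)=1-\mathbbm{1}(d(X)=1)$ to peel off a $d$-free constant. Your extra care in justifying why the indicator passes through the nonlinear $G_{S|X}$ (it is a case split given $X$, not a linearity claim) is a welcome clarification of a step the paper leaves implicit.
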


With Proposition~\ref{prop:obj} and a proper estimator of the outcome model $E (Y|X,S,A)$ using training data $\mathcal{D}_n = \{X_i, S_i, A_i, Y_i\}_{i=1}^n$, we replace the expectation of $Y_i$ by its estimate $\hat Y_i$ and solve the following, 
\begin{align} \label{eq:obj2}
     {\arg\max}_{d\in\mathbb{D}}  \textstyle n^{-1} \sum_{i = 1}^{n} [ \mathbbm{1}(d(x_i)=1) \{g_1(x_i)-g_2(x_i)\} ],
\end{align}
where
$ g_1(x_i) = G_{s|x} \{\hat Y_i(x_i,s,a_i=1)  \} $ and $ g_2(x_i) = G_{s|x} \{\hat Y_i(x_i,s,a_i=-1) \}$. 
We have the following proposition to address  noncontinuity in \eqref{eq:obj2} and transform it into a classification problem. Define $\mathbb{F}$ as a class of all measurable functions over $\mathbb{X}$.

\begin{proposition}\label{prop:obj2} 
    Maximizing the empirical objective in~\eqref{eq:obj2} is equivalent to a weighted classification problem of minimizing over $f \in \mathbb{F}$,
    \begin{align} \label{eq:obj_min}
        \textstyle
        n^{-1} \sum_{i = 1}^{n} \mathbbm{1}[ \operatorname{sgn}\{g_1(x_i)-g_2(x_i)\} \cdot f(x_i) <0 ] \cdot |g_1(x_i)-g_2(x_i)|
    \end{align}
    with features $x_i$, a label $\operatorname{sgn}\{g_1(x_i)-g_2(x_i)\}$ and the sample weight $|g_1(x_i)-g_2(x_i)|$ for $1\leq i\leq n$. 
\end{proposition}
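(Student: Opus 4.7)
The plan is to show that the maximization in \eqref{eq:obj2} and the weighted 0-1 classification risk in \eqref{eq:obj_min} differ only by an additive constant (that does not depend on the learned rule), after identifying a decision rule $d(x)\in\{-1,1\}$ with the sign of a measurable score $f\in\mathbb{F}$. The key algebraic move is to write the margin
\[
g_1(x_i)-g_2(x_i) \;=\; y_i\, w_i, \qquad y_i := \operatorname{sgn}\{g_1(x_i)-g_2(x_i)\}, \qquad w_i := |g_1(x_i)-g_2(x_i)|,
\]
which turns the objective into a sum indexed by signed weights, suggesting a reduction to weighted binary classification with labels $y_i$ and per-sample weights $w_i$.

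First I would fix the correspondence $d(x) = \operatorname{sgn}(f(x))$ for $f\in\mathbb{F}$, noting that since $\mathbb{F}$ is all measurable functions on $\mathbb{X}$, ranging $f$ over $\mathbb{F}$ is equivalent to ranging $d$ over all measurable $\{-1,1\}$-valued rules (up to a zero-margin tie-breaking set that does not affect the value of the objective when $w_i=0$). Next I would rewrite \eqref{eq:obj2} by splitting the sample according to the sign of $y_i$:
\[
\textstyle n^{-1}\sum_{i=1}^n \mathbbm{1}(d(x_i)=1)\, y_i w_i \;=\; n^{-1}\!\!\sum_{i:\, y_i=1}\!\mathbbm{1}(d(x_i)=1)\, w_i \;-\; n^{-1}\!\!\sum_{i:\, y_i=-1}\!\mathbbm{1}(d(x_i)=1)\, w_i.
\]
Using $\mathbbm{1}(d(x_i)=1)=1-\mathbbm{1}(d(x_i)=-1)$ for the $y_i=1$ block and combining the two error terms gives the identity
\[
\textstyle n^{-1}\sum_{i=1}^n \mathbbm{1}(d(x_i)=1)\{g_1(x_i)-g_2(x_i)\} \;=\; n^{-1}\!\!\sum_{i:\, y_i=1}\! w_i \;-\; n^{-1}\sum_{i=1}^n \mathbbm{1}(d(x_i)\neq y_i)\, w_i.
\]
The first term on the right is a constant that depends only on the data $(x_i, g_1, g_2)$, not on $d$ or $f$; therefore maximizing the left-hand side is equivalent to minimizing the weighted misclassification risk $n^{-1}\sum_i \mathbbm{1}(d(x_i)\neq y_i)\, w_i$.

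Finally, I would translate the misclassification indicator from $d$ back to $f$ via the equivalence $\mathbbm{1}(d(x_i)\neq y_i) = \mathbbm{1}(y_i\, d(x_i) < 0) = \mathbbm{1}(y_i\, f(x_i) < 0)$ whenever $d=\operatorname{sgn}(f)$, which recovers exactly \eqref{eq:obj_min} with features $x_i$, labels $y_i = \operatorname{sgn}\{g_1(x_i)-g_2(x_i)\}$ and sample weights $w_i = |g_1(x_i)-g_2(x_i)|$. The only delicate step I expect is the ties: when $g_1(x_i)=g_2(x_i)$, one has $y_i=0$ and $w_i=0$, so the term is vacuous in both objectives; similarly when $f(x_i)=0$, the indicator $\mathbbm{1}(y_i f(x_i)<0)=0$ and the induced $d(x_i)$ can be set arbitrarily without changing either objective. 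Handling these measure-zero cases carefully — together with verifying that sweeping $f$ over $\mathbb{F}$ really produces every measurable $d$ — is the only non-routine ingredient; the rest is the purely algebraic identity above.
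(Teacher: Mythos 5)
Your proposal is correct and follows essentially the same route as the paper's proof: identify $d=\operatorname{sgn}(f)$ and observe that the empirical objective equals a data-dependent constant minus the weighted $0$--$1$ misclassification loss with labels $\operatorname{sgn}\{g_1(x_i)-g_2(x_i)\}$ and weights $|g_1(x_i)-g_2(x_i)|$. If anything, your version is slightly more careful than the paper's terse chain of $\arg\max$/$\arg\min$ identities, since you make the additive constant $n^{-1}\sum_{i:\,y_i=1}w_i$ explicit and address the zero-margin and $f(x_i)=0$ tie cases.
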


With Proposition~\ref{prop:obj2}, we have transformed the optimization problem \eqref{eq:obj_max} into a weighted classification problem \eqref{eq:obj_min} where for subject $i$ with features $x_i$, the true label is $\operatorname{sgn}\{g_1(x_i)-g_2(x_i)\}$ and the sample weight is $|g_1(x_i)-g_2(x_i)|$. 
The estimated optimal decision rule by \eqref{eq:obj_min} is then given by $\hat d(x) = \operatorname{sgn}\{\hat f(x)\}$\footnote{If $\hat f(x)=0$, assign a random treatment.}. 
The proof of Proposition~\ref{prop:obj} and Proposition~\ref{prop:obj2} is presented in Appendix~\ref{suppl:proof}. 

\begin{wrapfigure}{r}{0.55\textwidth}
    \begin{minipage}{0.56\textwidth}
\begin{algorithm}[H]\footnotesize
  \caption{RISE (Robust individualized decision learning with sensitive variables)}\label{algo:code}
  \KwIn{Training data $\mathcal{D}_n = \{Y_i,A_i,X_i,S_i\}_{i=1}^n$}
  \KwOut{Estimated decision rule $\hat{d}$}
  \For{{$i\gets 1$ \KwTo $n$}}{
  $\hat Y_i(x_i,s_i,a_i) \gets$ Model $E(Y |X, S, A=a)$ using $\mathcal{D}_n$ with $a=1$ and $a=-1$, respectively\;
  \uIf{$S$ is continuous}{
    $ g_1(x_i) \gets $ Model $Q_{S|X,A} \{ E (Y|X,S,A=a) \}$ via quantile regressions of $\hat Y_i(x_i,s_i,a_i)$ on $x_i$, for $\mathcal{D}_n$ with $a=1$\;
    $ g_2(x_i) \gets $ Model $Q_{S|X,A} \{ E (Y|X,S,A=a) \}$ via quantile regressions of $\hat Y_i(x_i,s_i,a_i)$ on $x_i$, for $\mathcal{D}_n$ with $a=-1$\;
  }
  \uElseIf{$S$ is discrete}{
    $ g_1(x_i) \gets $ Compute $\inf_{s \in \mathbb{S}} \{ \hat Y_i(x_i,s,a_i=1) \}$\;
    $ g_2(x_i) \gets $ Compute $\inf_{s \in \mathbb{S}} \{ \hat Y_i(x_i,s,a_i=-1) \}$\;
  }{}
  }
  ${\hat{d}} \gets$ Build a weighted classification model with features $x_i$, label $\operatorname{sgn}\{g_1(x_i)-g_2(x_i)\}$, and sample weight $|g_1(x_i)-g_2(x_i)|$ for $1\leq i \leq n$\;
  \Return $\hat{d}$.
\end{algorithm}
\vspace{-0.5cm}
\end{minipage}
  \end{wrapfigure}
  
Algorithm~\ref{algo:code} provides an algorithmic overview of RISE. The inner expectation $E (Y|X,S,A)$ can be modeled as $\hat Y(X,S,A)$ using a twin model separated by the treatment and control groups (i.e., a T-learner as in \cite{kunzel2019metalearners}). 
For a continuous $S$, $G(X,A)= Q_{S|X,A} \{ E (Y|X,S,A) \}$ is estimated via a quantile regression of $\hat Y$ on $X$ but without $S$. 
For a discrete $S$, an estimate of $G(X,A) = \inf_{S} \{ E (Y|X,S,A) \}$ is obtained by finding the minimum among $\{E (Y|X,S=1,A),\ldots,E (Y|X,S=K,A)\}$. 
The estimated decision rule can then be obtained from the weighted classification. 
In our implementation, neural networks are used to fit models in the training data sets. The details on modeling and hyperparameter tuning via cross-validations are given in Appendix~\ref{suppl:tune}. 
A Python package \texttt{rise} based on neural networks is available on GitHub 
(\url{https://github.com/ellenxtan/rise}). 
Note that the model choices are flexible.

\subsection{Extension to Multiple Sensitive Variables}\label{sec:exten}

The extension from $S$ being a single continuous variable to multiple continuous variables is straightforward in  Algorithm~\ref{algo:code}.
For multiple discrete sensitive variables, similar estimation procedure can be conducted as outlined in Section~\ref{sec:diff_g}.  Suppose there are $L$ discrete sensitive variables, i.e., $\mathcal{S} = \{S_1, S_2, \ldots, S_L\}$. 
The inner expectation $E(Y|X,S_1,\ldots,S_L,A)$ can be obtained with a twin model of $Y$ on $X$ and all $\mathcal{S}$ for each treatment level. 
The infimum over $\mathbb{S}$ is obtained by finding the minimum iterating space of possible parameter values for each sensitive variable. 
See Section~\ref{ssec:app} for an example of using multiple discrete sensitive variables. 
We will discuss in Section~\ref{ssec:disc} the challenges and future work related to the scenario with a mixture of continuous and discrete sensitive variables and the identification of vulnerable subjects under these cases.

\section{Numerical Studies}\label{sec:numerical}

In this section, we perform extensive numerical experiments to investigate the merit of robustness of the proposed framework via simulations and three real-data applications.
The results demonstrate that the proposed rules achieve a robust objective with sensitive variables unavailable at the time of decision while maintaining comparable mean outcomes.

\textbf{Compared approaches. } For comparison, we consider the naive and mean-optimal approaches described in Section~\ref{sec:assump}, which correspond to different choices of $G(\cdot)$ functions. The naive decision rule that simply disregard information of $S$, denoted as \textbf{Base}, can be formulated in our optimization framework of \eqref{eq:obj_max} by letting $G(X,A) = E (Y|X,A)$. The IDR can be estimated directly by fitting a model of $Y$ on $X$ in each treatment arm. 
The resulting IDR is not sensitive variables-aware and is biased due to confounding, as discussed. 
Another IDR that resembles traditional mean-optimal decision rules, denoted as \textbf{Exp}, can be formulated as $G(X,S,A) = E (Y|X,S,A)$. This can be obtained by training a classification model without $S$, i.e., only using $X$, after obtaining an outcome model for the inner expectation $E(Y|X,S,A)$. Note that this approach is not robust to extreme behaviors in $S$. 
The modeling approaches described in Appendix~\ref{suppl:tune} apply to here. 
We also include the \textit{double robust} \citep{Chernozhukov2018dml,tan2022doubly} versions of Base and Exp, respectively, by adapting Policytree (PT) \citep{sverdrup2020policytree,athey2021policy}, the latest state-of-the-art policy learning method for maximizing the expected values. The double robust analogues of Base and Exp are termed \textbf{PT-Base} and \textbf{PT-Exp}, respectively.

\textbf{Evaluation metrics. } 
\textit{1) Objective:} the quantile objective is estimated and reported for a continuous $S$ and the infimum objective is for a discrete $S$. 
The objective, when $\tau < 0.5$, (here $\tau = 0.25$) represents the value of the ``low performers'' among all possible value of $S$ under a given $d$.
\textit{2) Value:} the value function, or expected reward used by the most existing methods, such as \cite{manski2004statistical,qian2011performance}, is defined as $V(d) = E\{Y(d)\}$. 
It represents the ``average performers''.
For randomized trials, an unbiased estimator of $V(d)$ is given by $\hat{V}(d) = \{\sum_{i=1}^T Y_i {\mathbbm{1}}(A_i = \hat d(X_i)) / \pi(A_i,X_i) \} / \{\sum_{i=1}^T {\mathbbm{1}}(A_i = \hat d(X_i)) / \pi(A_i,X_i) \}$ \citep{murphy2001marginal}, where $T$ is the sample size of the test data and $\pi(A,X)$ is propensity score.  For observational studies, the value is estimated with $\hat{V}(d) = T^{-1}\sum_{i=1}^T \hat{Y_i}(x_i,s_i,a_i=\hat{d})$.
We report the metrics among all subjects and among the potential vulnerable subgroup, respectively. 
For simulation, we consider training data and testing data with sample sizes of 8,000 and 2,000, respectively. 
For real-data applications, we consider a 80-20 split of the dataset into a training data and a testing data. 
Continuous covariates are standardized before the estimation. 
All results are based on 100 replications. 
Experiments are performed on a 6-core Intel Xeon CPU E5-2620 v3 2.40GHz equipped with 64GB RAM.

\subsection{Simulation Studies}\label{ssec:simulation}

\textit{Example 1. } 
Here we provide details for the simulation of the motivating example introduced in Section~\ref{sec:intro}. 
The outcome is generated according to: 
$Y_i = \mathbbm{1}(X_i>0.5)\{5 + 10\mathbbm{1}(A_i=1) + 22S_i - 24\mathbbm{1}(A_i=1) S_i\} + \mathbbm{1}(X_i\leq0.5)\{11 + 19\mathbbm{1}(A_i=1) + 2S_i - 32\mathbbm{1}(A_i=1) S_i\} + \epsilon_i$, where the covariate $X_i \sim Unif(0,1)$, treatment assignment $A_i \sim Bernoulli(0.5)$, and the noise $\epsilon_i \sim N(0,1)$. For a discrete type $S$, $S_i \sim Bernoulli(0.5)$. For a continuous type $S$, $S_i$ is generated from a mixture of beta distributions, $Beta(4, 1)$ and $Beta(1, 4)$, with equal mixing proportions.

\textit{Example 2. }
We generate $Y$ using the following model: 
$Y_i = \{0.5 + \mathbbm{1}(A_i=1) + \exp(S_i) - 2.5S_i\mathbbm{1}(A_i=1)\}  \{1+X_{i1} -X_{i2} +X_{i3}^2 +\exp(X_{i4})\} + \{1 + 2\mathbbm{1}(A_i=1) + 0.2\exp(S_i) - 3.5S_i\mathbbm{1}(A_i=1)\} \{1+5X_{i1} -2X_{i2} +3X_{i3} +2\exp(X_{i4})\} + \epsilon$, 
where $X_{ij} \sim Unif(0,1), ~j=1,\ldots,6$, $A$ satisfies $\log \{P(A_i = 1 |X_i)/P(A = 0 |X_i)\} = 0.6(-S_i + X_{i1} - X_{i2} + X_{i3} - X_{i4} + X_{i5} - X_{i6})$, and $\epsilon_i \sim N(0,1)$. For a continuous type $S$, $S_i$ is generated from a mixture of beta distributions, $Beta(4,1)$ and $Beta(1,4)$, with equal mixing proportions;  
for a discrete type $S$, we consider a binary $S_i$ that satisfies $\log \{P(S_i = 1 |X_i)/P(S_i = 0 |X_i)\} = -2.5 + 0.8(X_{i1} + X_{i2} + X_{i3} + X_{i4} + X_{i5} + X_{i6})$.

Table~\ref{t:sim_toy_full} summarizes the performance of the proposed IDRs compared to the mean criterion for Example 1 and Example 2. The proposed RISE achieves the largest objectives and improves the value among vulnerable subjects, while maintaining comparative overall values. 
As for the objective, intuitively, the proposed rule is expected to achieve a larger objective than all other methods uniformly in $\mathbb{X}$. 
We also point out that there is no direct relationship between the objective among all subjects versus the objective among vulnerable subjects. For example, using the toy example with setup in Table~\ref{t:toy_setup}, and limiting to subjects with $X \leq 0.5$ only, $S=1$ is vulnerable and is assigned $A=-1$ by the proposed RISE. 
The objective among $S=1$ is 13 but the objective among both $S=0$ and $S=1$ is $12=(11+13)/2$, which is smaller than that among the vulnerable group. 
In other words, by protecting the vulnerable subjects, the proposed rule may lead to an increase in the outcome of the vulnerable group, and the gain may result in a higher outcome than the overall mean outcome.
PT-Exp tends to show the best improvement in terms of the overall value, as the doubly robust-based estimators tend to reduce variance in value estimation. However, PT-Exp is shown to have minimal benefits for vulnerable subjects. RISE still shows the largest gain in the objective and value among vulnerable subjects among all compared methods.

\begin{table}[!htb]
\centering
\caption{Simulation results for Example 1 and Example 2. Standard error in parenthesis. }
\label{t:sim_toy_full}
\resizebox{0.99\columnwidth}{!}{
\begin{scriptsize}
\begin{tabular}{@{}ccccccc@{}}
\toprule
Example & Type of $S$ & IDR & Obj. (all) & Obj. (vulnerable) & Value (all) & Value (vulnerable) \\ \midrule
\multirow{10}{*}{1} & \multirow{5}{*}{Disc.} 
     & Base & 7.03 (0.03) & 7.01 (0.04) & 14.3 (0.05) & 7.92 (0.06) \\
 &   & Exp  & 6.39 (0.03) & 6.39 (0.04) & {14.4} (0.05) & 7.14 (0.06) \\
 &    & {PT-Base}   & {2.66 (0.02)} & {2.65 (0.02)} & {15.4 (0.05)} & {2.58 (0.02)} \\
 &    & {PT-Exp}   & {2.62 (0.02)} & {2.62 (0.02)} & {\textbf{15.5} (0.05)} & {2.55 (0.02)} \\
 &   & RISE  & \textbf{12.0} (0.01) & \textbf{12.0} (0.01) & 13.0 (0.01) & \textbf{14.0} (0.01) \\
 \cmidrule(l){2-7}
 & \multirow{5}{*}{Cont.} 
     & Base & 9.12 (0.03) & 9.14 (0.04) & 14.5 (0.08) & 8.25 (0.11) \\
 &   & Exp  & 8.75 (0.03) & 8.75 (0.04) & {14.6} (0.08) & 7.58 (0.06) \\
 &    & {PT-Base}   & {6.71 (0.03)} & {6.72 (0.03)} & {15.3 (0.05)} & {4.52 (0.02)} \\
 &    & {PT-Exp}   & {6.68 (0.02)} & {6.67 (0.02)} & {\textbf{15.4} (0.05)} & {4.47 (0.02)} \\
 &   & RISE  & \textbf{12.2} (0.02) & \textbf{12.2} (0.03) & 13.0 (0.01) & \textbf{13.7} (0.01) \\ 
 \midrule
\multirow{10}{*}{2} & \multirow{5}{*}{Disc.} 
      & Base & 7.79 (0.02) & 8.66 (0.03) & 19.4 (0.04) & 11.4 (0.06) \\
 &    & Exp   & 9.12 (0.03) & 10.1 (0.03) & \textbf{19.5} (0.04) & 14.4 (0.05) \\
 &    & {PT-Base}   & {7.19 (0.03)} & {7.77 (0.03)} & {19.0 (0.05)} & {9.71 (0.05)} \\
 &    & {PT-Exp}   & {8.30 (0.02)} & {9.03 (0.03)} & {{19.1} (0.04)} & {12.2 (0.05)} \\
 &    & RISE   & \textbf{13.5} (0.01) & \textbf{14.0} (0.01) & 17.4 (0.02) & \textbf{22.1} (0.02) \\
 \cmidrule(l){2-7}
& \multirow{5}{*}{Cont.} 
      & Base & 9.89 (0.02) & 9.87 (0.03) & 17.6 (0.02) & 9.09 (0.04) \\
 &    & Exp   & 11.1 (0.02) & 11.1 (0.02) & {17.8} (0.02) & 12.2 (0.04) \\
 &    & {PT-Base}   & {9.30 (0.02)} & {9.29 (0.03)} & {18.0 (0.03)} & {7.61 (0.04)} \\
 &    & {PT-Exp}   & {9.41 (0.02)} & {9.41 (0.02)} & {\textbf{18.1} (0.02)} & {7.92 (0.04)} \\
 &    & RISE   & \textbf{14.1} (0.01) & \textbf{14.2} (0.02) & 17.0 (0.01) & \textbf{20.3} (0.03) \\ 
 \bottomrule
\end{tabular}
\end{scriptsize}
}
\end{table}
\vspace{1em}

In the appendix, we consider a continuous $S$ for different quantile criteria $\tau=0.1$ and $0.5$ to test the robustness of RISE. 
Results show that when $\tau$ is small, there is more strength in the proposed method, as the algorithm aims to improve the worst-outcome scenarios. The proposed RISE has the largest gain in objective and value among vulnerable subjects when $\tau$ is 0.1, and has similar performance as the compared approaches when $\tau$ is 0.5. 
We also consider a scenario where $S$ is not involved in the data generation of $Y$, i.e., Assumption~\ref{ass: causal}c is simplified as $\{Y(-1), Y(1)\} \perp A | X$. 
The estimated objective and value function are similar across all compared approaches, which indicates the robustness of RISE. 
Finally, we study the performances of our method when Assumption~\ref{ass: causal}b is nearly violated or Assumption~\ref{ass: causal}c is violated. Similar patterns have been observed that the proposed RISE achieves the largest objectives and improves the value among vulnerable subjects, while maintaining comparable overall values. 
The details can be found in Appendix~\ref{suppl:sim}.

\subsection{Real-data Applications}\label{ssec:app} 
 
We present three real-data examples to showcase the robust performance of RISE. These applications consider either fairness or safety in the context of policy \citep{lalonde1986evaluating} and healthcare \citep{hammer1996trial,seymour2016assessment} where sensitive variables commonly exist. 

\textbf{Fairness in a job training program. }
To illustrate the implication of the proposed method from a fairness perspective, we consider the National Supported Work (NSW) program \citep{lalonde1986evaluating} for improving personalized recommendations of a job training program on increasing incomes. This program intended to provide a 6 to 18-month training for individuals in face of economical and social stress such as former drug addicts and juvenile delinquents. 
The original experimental dataset consists of 185 individuals who received the job training program ($A = 1$) and 260 individuals who did not ($A = -1$). 
The baseline covariates are age, years of schooling, race (1 = African Americans or Hispanics, 0 = others), married (1 = yes, 0 = no), high school diploma (1 = yes, 0 = no), earning in 1974, and earning in 1975. 
The outcome variable is the earning in 1978. 
In the exploratory analysis using causal forest \citep{wager2018estimation}{, a random forest-based method for causal inference}, we observe that age may play an important role in the causal effect of the job training program on the long-term post-market earning. In this application example we use age as the sensitive variable $S$ and other baseline covariates as $X$. 
The earnings in years 1974, 1975, and 1978 are transformed by taking the logarithm of the earning plus one.

\textbf{Improvement of HIV treatment. }
To illustrate the implication of the proposed method from a safety perspective when there is delayed information, we consider the ACTG175 dataset among HIV positive patients \citep{hammer1996trial}. 
The original study considers a total of 2,139 patients who were randomly assigned into four treatment groups. In this data application, we focus on finding the optimal IDRs between two treatments: zidovudine combined with didanosine ($A=-1$) and zidovudine combined with zalcitabine ($A=1$). The total number of patients receiving these two treatments is 1,046. 
The baseline covariates we consider are 
age, weight, CD4 T-cell amount at baseline, hemophilia (1 = yes, 0 = no), homosexual activity (1 = yes, 0 = no), Karnofsky score, history of intravenous drug use (1 = yes, 0 = no), gender (1 = male, 0 = female), CD8 T-cell amount at baseline, race (1 = non-Caucasian, 0 = Caucasian), number of days of previously received antiretroviral therapy, 
use of zidovudine in the 30 days prior to treatment initiation (1 = yes, 0 = no), and
symptomatic indicator (1 = symptomatic, 0 = asymptomatic). 
The outcome variable is the CD4 T-cell amount at $96\pm5$ weeks from the baseline. 
We consider CD8 T-cell amount at baseline as the sensitive variable. 
The response of CD8 T-cell among HIV positive patients has not been fully understood \citep{boppana2018understanding}. 
Clinically, it is plausible that only CD4 is measured in clinical visits where treatments are based on, hence CD8 might not be measured and not used in decision making. 
As our exploratory analysis using causal forest shows, CD8 T-cell amount may play an important part in the treatment effect of the outcome.

\textbf{Safe resuscitation for patients with sepsis. }
For this application, we apply the proposed method to treating sepsis, a life-threatening disease. 
This application intends to provide an example to apply our method with multiple categorical sensitive variables in the scenario where there is missing yet important information at the time of decision making. 
We apply the proposed method to a sepsis study from the University of Pittsburgh Medical Center (UPMC). 
The original study cohort includes 30,687 patients with Sepsis-3 \citep{seymour2016assessment} within 6 hours of hospital arrival from 14 UPMC hospitals between 2013 and 2017. 
For our data analysis, we consider $X$ to be baseline patient characteristics 4 hours before sepsis onset, which includes patient demographics of age, gender (1 = male, 0 = female), race (1 = Caucasian, 0 = others), and weight, and vital signs of usage of mechanical ventilation (1 = yes, 0 = no), respiratory rate, temperature, intravenous fluids (1 = yes, 0 = no), Glasgow Coma Scale score, platelets, blood urea nitrogen, white blood cell counts, glucose, creatinine. 
We consider two sensitive variables, lactate and Sequential Organ Failure Assessment (SOFA) score 4 hours before sepsis onset. 
Lactate and SOFA score have been two important indicators of sepsis severity \citep{howell2007occult, krishna2009evaluation,shankar2016developing}. 
Different from the baseline patient demographics or common vital signs that are typically obtained at the admission of patients, SOFA score combines performance of several organ systems in the body \citep{seymour2016assessment}, which requires additional calculation and cannot be obtained directly. Lactate labs measures the level of lactic acid in the blood \citep{andersen2013etiology} and are less common in routine examination, which could be delayed in ordering. 
Hence, their measurements are obtained retrospectively after treatment decisions have been made and are not available at times of decision. 
We dichotomize lactate level at clinically meaningful value of 2 mmol/L \citep{shankar2016developing}, and SOFA score at value of 6 for analysis \citep{vincent1996sofa,ferreira2001serial}. 
The treatment option is whether the patient took any vasopressors during the first 24 hours after sepsis onset. The outcome is hospital survival ($Y=1$) or death ($Y=0$).
The analysis cohort contains 6,539 patients in total. 
We are interested in making decision about whether to treat patients with vasopressors in the first 24 hours after sepsis onset given the measurements of lactate and SOFA are not available at the time of decision making. 
Additional {rationale} and background on this example are provided in Appendix~\ref{suppl:real-sepsis}.

\begin{table}[!htb]
\centering
\caption{Estimated objective and value of different IDRs for the three data applications. Standard error in parenthesis. The outcome of each study is italicized. }
\label{t:res_data}
\resizebox{0.99\columnwidth}{!}{
\begin{scriptsize}
\begin{tabular}{@{}cccccc@{}}
\toprule
Dataset & IDR & Obj. (all) & Obj. (vulnerable) & Value (all) & Value (vulnerable) \\ \midrule
\multirow{5}{*}{\begin{tabular}[c]{@{}c@{}}NSW\\ \textit{log(income+1)}\end{tabular}} 
 & Base  & 5.26 (0.04) & 5.28 (0.05) & 6.32 (0.05) & 6.33 (0.07) \\
 & Exp   & 5.22 (0.04) & 5.24 (0.05) & 6.37 (0.05) & 6.37 (0.07) \\
 & {PT-Base}   & {4.97 (0.04)} & {5.08 (0.06)} & {6.40 (0.03)} & {6.38 (0.05)} \\
 & {PT-Exp}   & {5.03 (0.04)} & {5.11 (0.05)} & {\textbf{6.43} (0.03)} & {6.40 (0.05)} \\
 & RISE  & \textbf{5.43} (0.04) & \textbf{5.44} (0.04) & {6.42} (0.04) & \textbf{6.42} (0.06) \\ 
 \midrule
\multirow{5}{*}{\begin{tabular}[c]{@{}c@{}}ACTG175\\ \textit{CD4 T-cell amount}\end{tabular}} 
 & Base  & 336.9 (1.65) & 338.1 (2.23) & 353.5 (1.86) & 357.5 (2.24) \\ 
 & Exp   & 337.5 (1.65) & 338.9 (1.80) & {355.9} (1.95) & 359.1 (2.21) \\ 
  & {PT-Base}   & {299.7 (1.01)} & {299.5 (1.91)} & {356.9 (1.72)} & {350.7 (2.54)} \\
 & {PT-Exp}   & {300.1 (0.99)} & {299.9 (1.83)} & {\textbf{357.1} (1.55)} & {352.7 (2.61)} \\
 & RISE  & \textbf{351.5} (1.67) & \textbf{351.2} (1.80) & 351.8 (1.88) & \textbf{363.1} (2.19) \\
 \midrule
\multirow{5}{*}{\begin{tabular}[c]{@{}c@{}}Sepsis\\ \textit{survival rate}\end{tabular}} 
 & Base  & 0.803 (0.001) & 0.822 (0.001) & 0.965 (0.001) & 0.905 (0.002) \\ 
 & Exp   & 0.803 (0.001) & 0.822 (0.002) & 0.966 (0.001) & 0.908 (0.002) \\ 
& {PT-Base}   & {0.758 (0.001)} & {0.771 (0.002)} & {0.981 (0.001)} & {0.848 (0.003)} \\
 & {PT-Exp}   & {0.758 (0.001)} & {0.772 (0.002)} & {\textbf{0.984} (0.001)} & {0.875 (0.003)} \\
 & RISE  & \textbf{0.836} (0.001) & \textbf{0.833} (0.001) & {0.972} (0.001) & \textbf{0.923} (0.002) \\ 
 \bottomrule
  \vspace{0.1cm}
\end{tabular}
\end{scriptsize}
}
\end{table}

\textbf{Results. } Table~\ref{t:res_data} presents the performance of various IDRs on the three applications. As expected, RISE has the largest objective as well as value among vulnerable subjects. The patterns are similar to that in  Section~\ref{ssec:simulation}. 
We apply the Shapley additive explanations (SHAP) approach \citep{lundberg2017unified} to help visualize and interpret the important covariates in the decision rules given by RISE and Exp, respectively, in Appendix~\ref{suppl:real-visual}. 
The SHAP approach provides unified values to describe the correlation between each feature and the predicted decision rule, respectively \citep{lundberg2017unified}. 
Overall, the direction of correlations is similar for RISE and Exp, but their
ranking of feature importance may be different.

\section{Discussion} \label{ssec:disc}

We have proposed RISE, a robust decision learning framework with a novel quantile- or infimum-optimal treatment objective intended to improve the worst-case scenarios of individuals when decisions with uncertainty need to be made, but with sensitive yet important information missing. 
Our approach can be applied to a broad range of applications, including but not limited to policy, education, healthcare, etc.  
For a mixture of continuous and discrete sensitive variables, the estimated rule can be obtained by first taking the infimum over the discrete ones as in Section~\ref{sec:exten}, then obtaining the quantile over the continuous ones. However, challenges remain in finding the vulnerable subjects described in Section~\ref{ssec:vul} under these settings as it may be computationally difficult to find a vulnerable set of $S$ when it is multi-dimensional.
Another future work includes the extension of the current binary treatment option to a multi-treatment option. 
It is also worth mentioning that our work can be naturally extended to the scenario where there exist unmeasured confounders. As long as the conditional average outcome given observed covariates can be identified (via instrumental variables such as \citep{wang2018bounded} or negative control variables such as \citep{qi2021proximal}), our method can be applied.

\section*{Acknowledgements}
This research was supported in part by the Competitive
Medical Research Fund of the UPMC Health System and
the University of Pittsburgh Center for Research Computing
through the resources provided. 
The authors thank Jason N. Kennedy for the sepsis data preparation, and Emily B. Brant, Chung-Chou H. Chang and Gong Tang for helpful discussions and feedback.

\newpage

\bibliographystyle{plain}
\bibliography{paper}

\newpage

\appendix

\section{Additional Literature Review}\label{suppl:lit}

Typical model-based methods for deriving IDRs include Q-learning such as \citep{watkins1992q,murphy2003optimal,moodie2007demystifying} and A-learning such as \citep{robins2000marginal,murphy2005experimental} where a model of responses is imposed and the optimal decision rule is obtained by optimizing value function derived from the model. 
Model-based methods posit a model of responses given observed covariates and treatment assignments, and obtain the optimal IDR by optimizing the corresponding value function derived from the model. 
Q-learning optimizes the corresponding value function derived from a parametric model of responses given observed covariates and treatment assignments, and it results in an optimal decision rule. 
A-learning is a semiparametric method, which derives from a model that directly describes the difference between treatments, with the baseline remaining unspecified. 
On the other hand, model-free methods such as \cite{zhang2012robust,zhao2012estimating,zhao2015doubly} assign values to actions simply through trial and error without pre-specifying a model. 
Besides, contextual bandit methods (see \cite{bietti2021contextual} and references therein) test out different actions and automatically learn which one has the most rewarding outcome for a given situation. 
See \citep{chakraborty2010inference,chakraborty2013statistical,laber2014dynamic,kosorok2015adaptive} and references therein for a comprehensive review on general IDRs under causal settings.

\section{Proof of Propositions}\label{suppl:proof}

\begin{proof}[Proof of Proposition \ref{prop:obj}]
We observe that to maximize the objective function in~\eqref{eq:obj_max} is equivalent to maximizing 
\begin{align*}
    & E_X \big[ G_{S|X} \{E (Y|X,S,A=d(X)) \} |X \big] \\
    &= E_X \big[ G_{S|X} \{E (Y|X,S,A=1) \}\mathbbm{1}(d(X) = 1)   \\
    &~~~~~~~~ + G_{S|X} \{E(Y|X,S,A=-1)\mathbbm{1}(d(X) = -1) \} \big] \\
    &= E_X \big\{ \mathbbm{1}(d(X) = 1) [G_{S|X} \{E (Y|X,S,A=1) \} - G_{S|X} \{E (Y|X,S,A=-1) \} ] \\
    &~~~~~~~~ + G_{S|X} \{E (Y|X,S,A=-1)\} \big\} \\
    & \propto E_X \big\{ \mathbbm{1}(d(X) = 1) [G_{S|X} \{E (Y|X,S,A=1) \}- G_{S|X} \{E (Y|X,S,A=-1) \}] \big\}.
\end{align*}
\end{proof}

\begin{proof}[Proof of Proposition \ref{prop:obj2}]
Let $d(x) = \operatorname{sgn}\{f(x)\}$, by this transformation, we consider the following objective on a smooth function $f(x)$, 
\begin{align*}
    &  {\arg\max}_{d\in\mathbb{D}}  \textstyle \frac{1}{n} \sum_{i = 1}^{n} \big\{ \mathbbm{1}(d(x_i)=1) [g_1(x_i)-g_2(x_i)] \big\} \\
    & = {\arg\max}_{f} \textstyle \frac{1}{n} \sum_{i = 1}^{n} \mathbbm{1}[ \operatorname{sgn}\{f(x_i)\} = 1] \cdot [g_1(x_i)-g_2(x_i)] \\
    & =  {\arg\min}_{f} \textstyle \frac{1}{n} \sum_{i = 1}^{n} \mathbbm{1}\{ 1 \cdot f(x_i) < 0 \} \cdot [g_1(x_i)-g_2(x_i)] \\
    & =  {\arg\min}_{f} \textstyle \frac{1}{n} \sum_{i = 1}^{n} \mathbbm{1}[ \operatorname{sgn}\{g_1(x_i)-g_2(x_i)\} \cdot f(x_i) <0 ] \cdot |g_1(x_i)-g_2(x_i)|.
\end{align*}
The sign of the estimated $f$ above is a solution of $d$ to \eqref{eq:obj2}. 

Hence, the proposed classification-based objective is to minimize
\begin{align*}
    \textstyle
    \frac{1}{n} \sum_{i = 1}^{n} \mathbbm{1}[ \operatorname{sgn}\{g_1(x_i)-g_2(x_i)\} \cdot f(x_i) <0 ] \cdot |g_1(x_i)-g_2(x_i)|. 
\end{align*}
To this point, we have transformed the optimization problem \eqref{eq:obj_max} into a weighted classification problem where for subject $i$ with features $x_i$, the true label is $\operatorname{sgn}\{g_1(x_i)-g_2(x_i)\}$ and the sample weight is $|g_1(x_i)-g_2(x_i)|$. 
\end{proof}

\section{Details on Modeling and Hyperparameter Tuning}\label{suppl:tune}

In our implementation, neural networks with mean or quantile losses are used to fit the models with hyperparameters tuned via a 5-fold cross validation in the training data sets. 
Specifically, implemented in TensorFlow \citep{abadi2016tensorflow}, neural networks with mean squared loss is used to model $E(Y|X,S,A)$ separated by the control arm and the treatment arm, respectively. 
For continuous $S$, to model $Q_{S|X,A} \{ E (Y|X,S,A) \}$, neural networks with quantile loss is used with a prespecified $\tau$, for the control arm and the treatment arm, respectively. 
In the final weighted classification model, neural networks with cross-entropy loss is used. 
Note that the model choices here are flexible. One can perform model selection if they would like to.

Hyperparameter tuning helps prevent overfitting and is essential in machine learning methods or other black-box algorithms such as neural networks. In our implementation, the optimal hyperparameters are obtained via a 5-fold cross validation in the training data sets. Specifically, we consider 
the number of hidden layers (1, 2, and 3 layers), 
the number of hidden units in each layer (256, 512, and 1024 nodes), 
activation function (RELU, Sigmoid, and Tanh), 
optimizer (Adam, Nadam, and Adadelta), 
dropout rate (0.1, 0.2, and 0.3), 
number of epochs (50, 100, and 200), 
and batch size (32, 64, and 128).

\section{Additional Simulations}\label{suppl:sim}

\textbf{Different quantile criteria. } For the quantile criteria, we also consider $\tau$ = 0.1 and 0.5, respectively. Table~\ref{t:sim_exp2_q0.1_0.5} presents the simulation results for Example 2 with continuous $S$ using 0.1 quantile criterion and 0.5 quantile criterion, respectively. 
Results show that when $\tau$ is small, there is more strength in the proposed method, as the algorithm aims to improve the worst-outcome scenarios. The proposed RISE has the largest gain in objective and value among vulnerable subjects when $\tau$ is 0.1, and has similar performance as the compared approaches when $\tau$ is 0.5.

\textbf{$S$ as a noise variable. } 
We generate the outcome $Y$ using the following model where $S$ is not involved: 
$Y = \mathbbm{1}(X_1\leq0.5)\{8+12\mathbbm{1}(A=1)+16\exp(X_2)-26\mathbbm{1}(A=1)X_2\} + \mathbbm{1}(X_1>0.5)\{13+3\mathbbm{1}(A_i=1)+2\exp(X_2)-8\mathbbm{1}(A=1)X_2\} + \epsilon$, 
where $X_j \sim Unif(0,1), ~j=1,2$, $A \sim Bernoulli(0.5)$, and $\epsilon \sim N(0,1)$. For continuous $S$, $S = \expit\{-2.5(1-X_1-X_2)\}$; for discrete $S$, we consider a binary $S$ that satisfies $\log \{P(S = 1 |X)/P(S = 0 |X)\} = -2.5(1-X_1-X_2)$. 
Table~\ref{t:sim_notimp} summarizes the performance of the proposed IDRs compared to the mean criterion for Example 2. The estimated objective and value function are similar for the compared IDRs, which indicates the robustness of the proposed RISE.

\textbf{Violations of causal assumptions. } To further test the robustness of the proposed RISE, we consider scenarios where Assumption~\ref{ass: causal} may not hold. 
To test the violation of positivity assumption, using the same setting as in Example 2, we consider an extreme propensity score, or the probability of being treated given $X$ and $S$. 
Specifically, we let $A$ satisfy $\log \{P(A_i = 1 |X_i)/P(A = 0 |X_i)\} = -1.2(-S_i + X_{i1} - X_{i2} + X_{i3} - X_{i4} + X_{i5} - X_{i6})$. 
To test the unconfoundedness assumption, a random normal noise, $e \sim N(0,1)$ is added to $X_1$ in the setting of Example 2. 
The simulation results are presented in Table~\ref{t:sim_pos} and Table~\ref{t:sim_unconf} respectively.

\begin{table}[!htb]
\centering
\caption{Simulation results for Example 2 with continuous $S$ using 0.1 quantile criterion and 0.5 quantile criterion, respectively. Standard error in parenthesis. 
The proposed RISE has more strengths when $\tau$ is small, as the algorithm aims to improve the worst-outcome scenarios. 
}
\label{t:sim_exp2_q0.1_0.5}
\resizebox{0.98\columnwidth}{!}{
\begin{footnotesize}
\begin{tabular}{@{}ccccccc@{}}
\toprule
Type of $S$ & $\tau$ & IDR & Obj. (all) & Obj. (vulnerable) & Value (all) & Value (vulnerable) \\ \midrule
\multirow{5}{*}{Cont.} & \multirow{5}{*}{0.1}
    & Base & 7.93 (0.03) & 7.92 (0.03) & 17.7 (0.02) & 8.64 (0.07) \\
    &   & Exp  & 8.88 (0.05) & 8.85 (0.05) & {17.8} (0.02) & 10.6 (0.12) \\
    &  & {PT-Base}   & {6.97 (0.02)} & {6.95 (0.02)} & {17.9 (0.03)} & {6.65 (0.04)} \\
    & & {PT-Exp}   & {7.11 (0.02)} & {7.08 (0.03)} & {\textbf{18.0} (0.03)} & {6.96 (0.05)} \\
    &   & RISE & \textbf{13.8} (0.01) & \textbf{13.7} (0.02) & 16.9 (0.01) & \textbf{20.9} (0.03) \\
  \midrule
\multirow{5}{*}{Cont.} & \multirow{5}{*}{0.5}
    & Base & 17.3 (0.04) & 17.2 (0.04) & 17.7 (0.02) & 23.8 (0.19) \\
    &    & Exp  & 17.2 (0.03) & \textbf{17.4} (0.03) & {17.8} (0.02) & 22.1 (0.17) \\
    &  & {PT-Base}   & {17.3 (0.05)} & {17.3 (0.05)} & {18.0 (0.03)} & {23.9 (0.26)} \\
    &  & {PT-Exp}   & {\textbf{17.4} (0.05)} & {\textbf{17.4} (0.05)} & {\textbf{18.1} (0.03)} & {\textbf{24.0} (0.25)} \\
    &    & RISE & \textbf{17.4} (0.04) & \textbf{17.4} (0.04) & {17.8} (0.02) & \textbf{24.0} (0.22) \\
 \bottomrule
 \vspace{0.5cm}
\end{tabular}
\end{footnotesize}
}
\end{table}

\begin{table}[!htb]
\centering
\caption{Simulation results for scenario when $S$ is a noise variable. Vulnerable subjects cannot be defined as $S$ is not important in the example. The estimated objective and value function are similar for the compared IDRs, which indicates the robustness of the proposed RISE. }
\label{t:sim_notimp}
\resizebox{0.98\columnwidth}{!}{
\begin{scriptsize}
\begin{tabular}{@{}cccccc@{}}
\toprule
Type of $S$ & IDR & Obj. (all) & Obj. (vulnerable) & Value (all) & Value (vulnerable) \\ \midrule
\multirow{5}{*}{Disc.} 
 & Base & 27.5 (0.03) & - & 27.5 (0.06) & - \\
 & Exp   & 27.5 (0.03) & - & 27.5 (0.06) & - \\
  & {PT-Base}   & {27.5 (0.02)} & {-} & {27.5 (0.03)} & {-} \\
 & {PT-Exp}   & {27.5 (0.02)} & {-} & {27.5 (0.03)} & {-} \\
 & RISE   & 27.5 (0.03) & - & 27.5 (0.06) & - \\
 \midrule
\multirow{5}{*}{Cont.} 
 & Base & 27.2 (0.04) & - & 27.3 (0.07) & - \\
 & Exp   & 27.2 (0.04) & - & 27.3 (0.07) & - \\
 & {PT-Base}   & {27.2 (0.04)} & {-} & {27.3 (0.07)} & {-} \\
 & {PT-Exp}   & {27.2 (0.04)} & {-} & {27.3 (0.07)} & {-} \\
 & RISE   & 27.2 (0.04) & - & 27.3 (0.07) & - \\ 
 \bottomrule
\end{tabular}
\end{scriptsize}
}
\end{table}

\begin{table}[!htb]
\centering
\caption{Simulation results for Example 2 where the positivity assumption in Assumption~\ref{ass: causal}b is nearly violated. Standard error in parenthesis. }
\label{t:sim_pos}
\resizebox{0.98\columnwidth}{!}{
\begin{scriptsize}
\begin{tabular}{@{}cccccc@{}}
\toprule
Type of $S$ & IDR & Obj. (all) & Obj. (vulnerable) & Value (all) & Value (vulnerable) \\ \midrule
\multirow{5}{*}{Disc.} 
 & Base & 10.0 (0.03) & 11.1 (0.03) & 19.3 (0.02) & 16.1 (0.04) \\
 & Exp  & 8.80 (0.03) & 9.77 (0.04) & \textbf{19.5} (0.02) & 13.6 (0.04) \\
  & {PT-Base} & 9.88 (0.03) & 10.7 (0.04) & 18.9 (0.02) & 15.4 (0.05) \\
 & {PT-Exp}   & 8.42 (0.03) & 9.14 (0.04) & 19.1 (0.02) & 12.4 (0.05) \\
 & RISE & \textbf{13.5} (0.01) & \textbf{14.0} (0.01) & 17.3 (0.01) & \textbf{22.0} (0.02) \\
 \midrule
\multirow{5}{*}{Cont.} 
 & Base  & 11.5 (0.03) & 11.5 (0.04) & 17.5 (0.03) & 13.1 (0.04) \\
 & Exp   & 10.4 (0.04) & 10.4 (0.05) & 17.8 (0.04) & 10.3 (0.05) \\
 & {PT-Base}  & 11.0 (0.04) & 10.9 (0.04) & 17.7 (0.02) & 11.8 (0.04) \\
 & {PT-Exp}   & 9.63 (0.03) & 9.61 (0.03) & \textbf{18.0} (0.02) & 8.38 (0.03) \\
 & RISE  & \textbf{14.3} (0.01) & \textbf{14.3} (0.02) & 16.9 (0.01) & \textbf{20.4} (0.02) \\ 
 \bottomrule
\end{tabular}
\end{scriptsize}
}
\end{table}

\begin{table}[!htb]
\centering
\caption{Simulation results for Example 2 where the unconfoundedness assumption in Assumption~\ref{ass: causal}c is violated. Standard error in parenthesis. }
\label{t:sim_unconf}
\resizebox{0.98\columnwidth}{!}{
\begin{scriptsize}
\begin{tabular}{@{}cccccc@{}}
\toprule
Type of $S$ & IDR & Obj. (all) & Obj. (vulnerable) & Value (all) & Value (vulnerable) \\ \midrule
\multirow{5}{*}{Disc.} 
 & Base   & 7.65 (0.04) & 8.44 (0.05) & 19.3 (0.03) & 11.1 (0.06) \\
 & Exp    & 8.94 (0.05) & 9.91 (0.06) & \textbf{19.4} (0.02) & 13.9 (0.06) \\
  & {PT-Base} & 6.84 (0.03) & 7.35 (0.04) & 18.9 (0.03) & 8.91 (0.05) \\
 & {PT-Exp}   & 7.95 (0.05) & 8.62 (0.06) & 19.1 (0.03) & 11.4 (0.06) \\
 & RISE   & \textbf{13.5} (0.01) & \textbf{14.0} (0.01) & 17.4 (0.01) & \textbf{22.1} (0.02) \\
 \midrule
\multirow{5}{*}{Cont.} 
 & Base & 9.58 (0.03) & 9.58 (0.03) & 17.9 (0.02) & 8.33 (0.05) \\
 & Exp  & 10.2 (0.04) & 10.2 (0.04) & 17.8 (0.02) & 9.83 (0.06) \\
 & {PT-Base}  & 9.27 (0.02) & 9.26 (0.03) & 17.9 (0.02)  & 7.51 (0.03) \\
 & {PT-Exp}   & 9.34 (0.02) & 9.34 (0.03) & \textbf{18.0} (0.02)  & 7.72 (0.03) \\
 & RISE & \textbf{14.2} (0.01) & \textbf{14.1} (0.02) & 16.9 (0.01) & \textbf{20.1} (0.03) \\ 
 \bottomrule
\end{tabular}
\end{scriptsize}
}
\end{table}

\newpage

\section{Additional Information and Results for Real-data Applications}\label{suppl:real}

\subsection{Data Availability} 
The job training dataset \citep{lalonde1986evaluating} is available at  \url{https://users.nber.org/~rdehejia/data/.nswdata2.html}. The ACTG175 dataset \citep{hammer1996trial} is available in the R package \texttt{speff2trial}. The sepsis dataset \citep{seymour2016assessment} is proprietary and not publicly available. 
All data used in this work are deidentified.

\subsection{Additional Background on the Sepsis Application}\label{suppl:real-sepsis}
Sepsis is leading cause of acute hospital mortality and commonly results in multi-organ dysfunction among ICU patients \citep{sakr2018sepsis,onyemekwu2022associations}. Clinically, treatment decisions for sepsis patients are needed to be made within a short period of time due to the rapid deterioration of patient conditions. 
Lactate and the Sequential Organ Failure Assessment (SOFA) score have been two important indicators of sepsis severity and has been found to be more useful for predicting the outcome of sepsis than other clinical vitals and comorbidity scores \citep{howell2007occult, krishna2009evaluation,shankar2016developing,machicado2021mortality}. 
Typically, information of baseline patient characteristics such as age, gender, race, and weight, and common vital signs such as usage of mechanical ventilation, respiratory rate, temperature, intravenous fluids, Glasgow Coma Scale score, platelets, blood urea nitrogen, white blood cell counts, glucose, and creatinine are obtained at the admission of patients. 
On the other hand, SOFA score combines performance of several organ systems in the body such as neurologic, blood, liver, and kidney \citep{seymour2016assessment,liu2019regional,koutroumpakis2021serum} and cannot be obtained directly. Lactate labs measures the level of lactic acid in the blood \citep{andersen2013etiology,prathapan2020peripheral,du2017cbinderdb} and are less common in routine examination, which could be delayed in ordering. 
Hence, their information may not be available by the time of treatment decision due to multiple reasons including doctors' delayed ordering, long laboratory processing time, or the rapid deterioration of development of sepsis, which poses tremendous difficulties for early diagnosis and treatment decisions within a short time. 
According to the new definition of Sepsis-3 \citep{shankar2016developing}, a serum lactate level greater than 2 mmol/L is considered to be in critical conditions and is highly likely to indicate a septic shock. Also, a SOFA score greater than 6 has been associated with a higher mortality \citep{vincent1996sofa,ferreira2001serial}.

\subsection{Visualizations } \label{suppl:real-visual}

We provide visualizations of features that are important in the estimated decision rules for the three real-data applications in Section~\ref{ssec:app}. The Shapely additive explanations (SHAP) \citep{lundberg2017unified} is considered to be a united approach to explaining the predictions of any machine learning or black-box models. Figure~\ref{fig:job_shap}, Figure~\ref{fig:actg_shap}, and Figure~\ref{fig:sepsis_shap} presents the SHAP variable importance plots in the final weighted classification model by RISE and Exp, respectively, for the three real-data applications. Correlations between the feature and their SHAP value are highlighted in color. The red color means a feature is positively correlated with assigning treatment A = 1 and the blue indicates a negative correlation. 
Overall, the direction of correlation is similar for RISE and Exp, but their ranking of feature importance may be different.

\textbf{Fairness in a job training program. } Figure~\ref{fig:job_shap} presents the SHAP variable importance plots in the final weighted classification model by RISE and Exp, respectively. 
We observe that whether having a high school diploma and income in 1974 are two important features in the variable important plot by RISE, while incomes in 1974 and in 1975 are important by Exp. It seems that being no degree and low income in 1974 has a higher chance of assigning $A=1$ (to receive the job training program) by RISE, while low income in 1974 and but a higher income in 1975 may be associated with assigning $A=1$ by Exp.

\textbf{Improvement of HIV treatment. }
Figure~\ref{fig:actg_shap} presents the SHAP variable importance plots in the final weighted classification model by RISE and Exp, respectively. 
We observe that age and CD4 T-cell counts are two important features in the variable important plot by RISE, while weight and number of days of previously received antiretroviral therapy are important by Exp. 
It seems that being of a younger age and high CD4 T-cell count has a higher chance of assigning $A=1$ (zidovudine combined with didanosine) by RISE, while being of a larger weight and few days of previously received antiretroviral therapy may be associated with assigning the treatment by Exp.

\textbf{Safe resuscitation for patients with sepsis. }
Figure~\ref{fig:sepsis_shap} presents the SHAP variable importance plots in the final weighted classification model by RISE and Exp, respectively. 
We observe that Glasgow Coma Scale score, age, and platelets appears to be important features in both the plot by RISE and that by Exp. 
Other important features in the plot by RISE include temperature and blood urea nitrogen, where in the plot by Exp, respiratory rate and white blood cell counts are of top importance. 
Being in a low temperature with a high blood urea nitrogen tends to be predicted as $A=1$ (to assign vasopressors) by RISE while being of higher respiratory rate with high white blood cell counts tends to be predicted as $A=1$ by Exp.

\begin{figure}[!htb]
\centering
 \begin{subfigure}{0.48\textwidth}
  \centerline{\includegraphics[width=\linewidth]{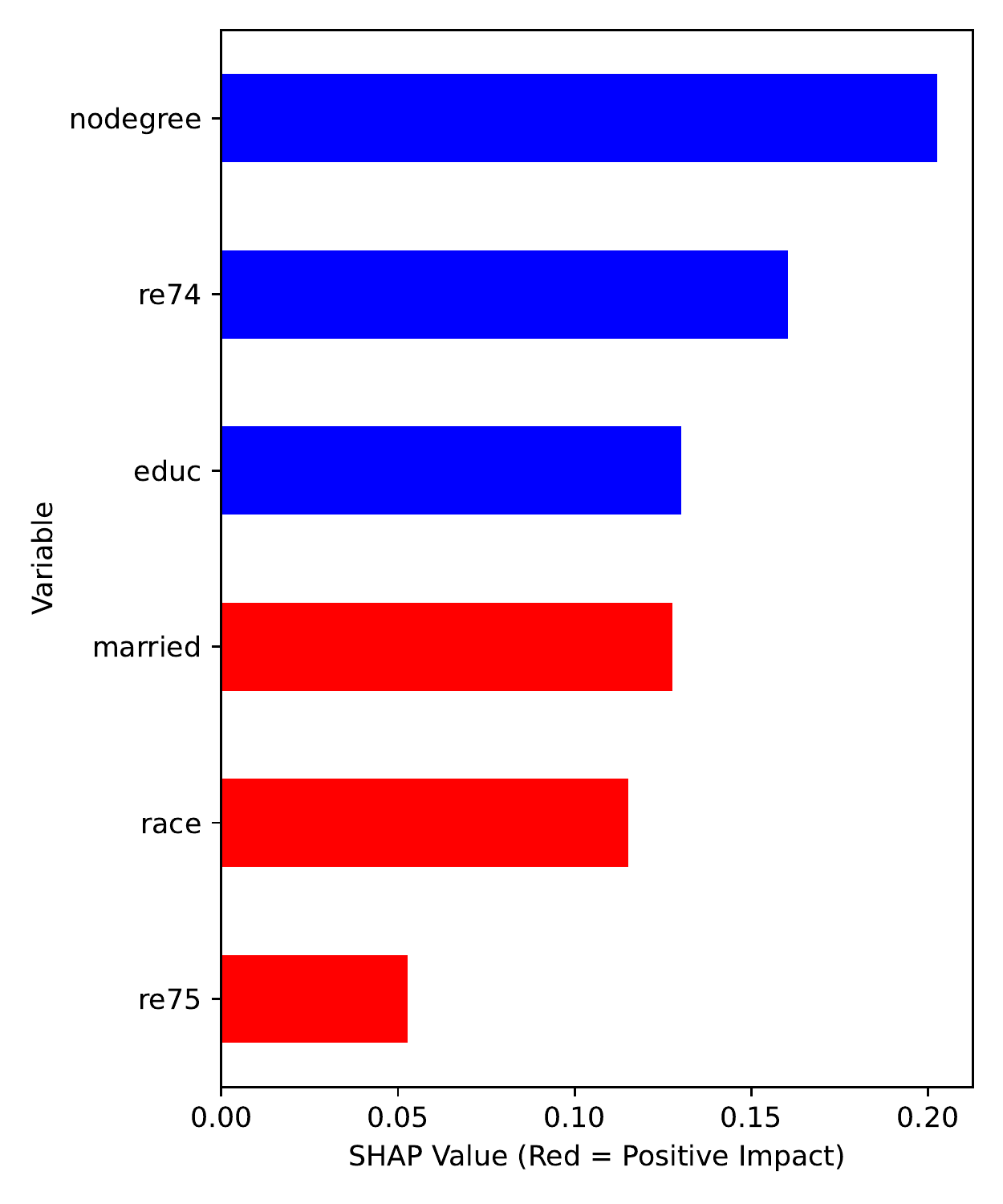}}
  \caption{}
 \end{subfigure}
 \begin{subfigure}{.48\textwidth}
  \centerline{\includegraphics[width=\linewidth]{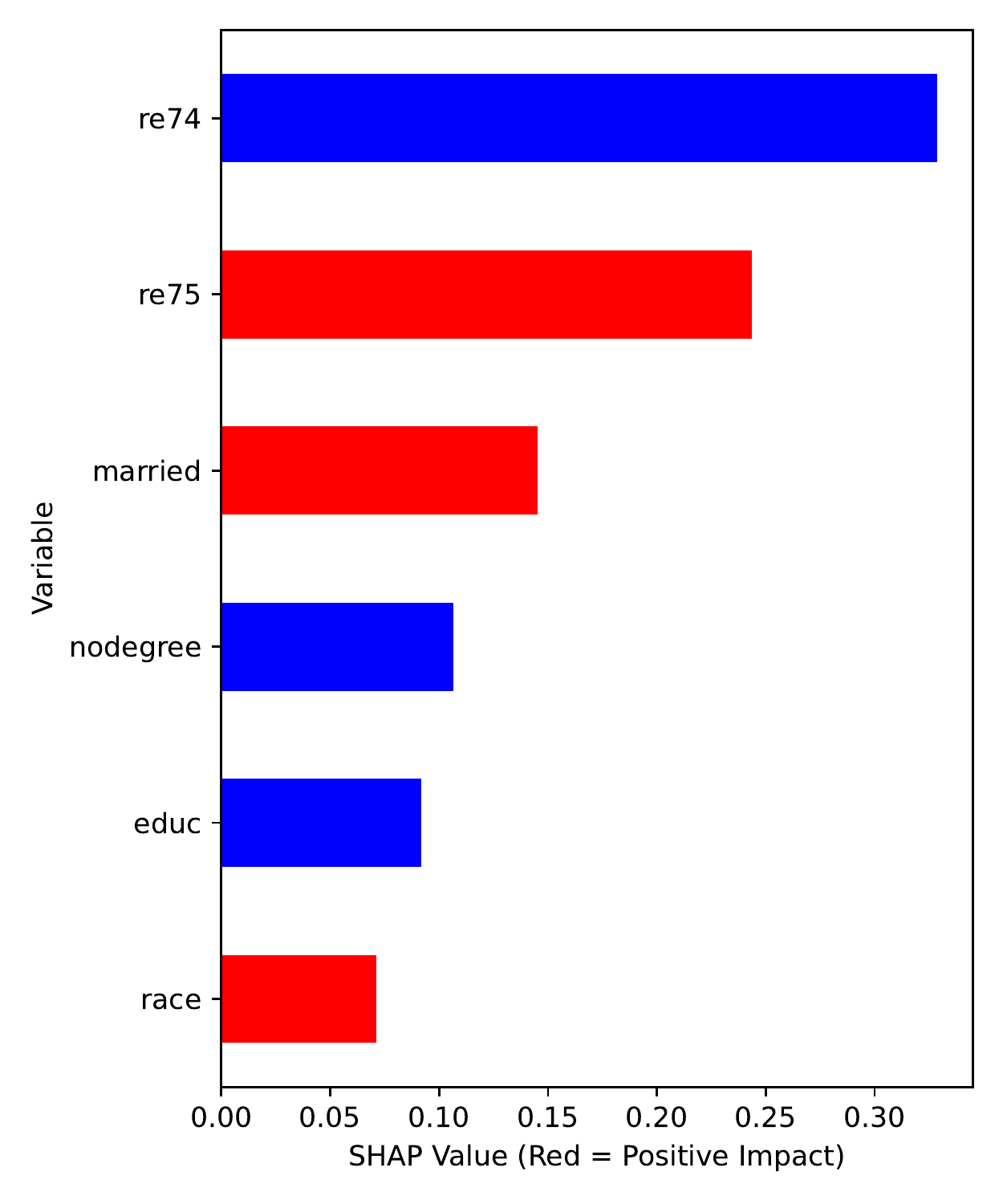}}
  \caption{}
 \end{subfigure}
 \caption{Visualization for the job training program: SHAP variable importance plots for decision rules RISE (a) and Exp (b), respectively. 
 Covariates ($X$) are ranked by variable importance in descending order. 
 Correlations between the feature and their SHAP value are highlighted in color. The red color means a feature is positively correlated with assigning treatment $A=1$ and the blue indicates a negative correlation. 
 } 
 \label{fig:job_shap}
\end{figure}

\begin{figure}[!htb]
\centering
 \begin{subfigure}{0.48\textwidth}
  \centerline{\includegraphics[width=\linewidth]{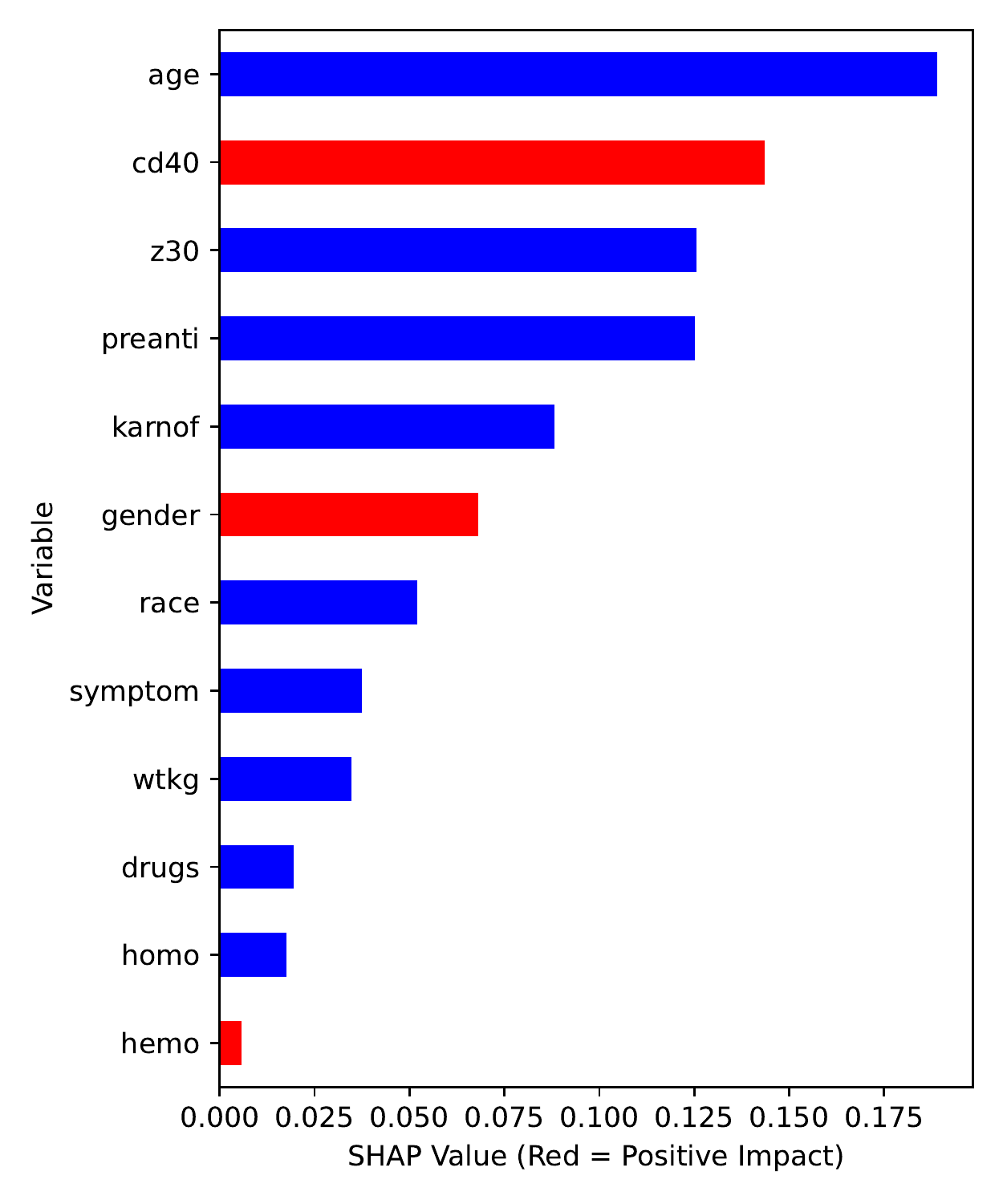}}
  \caption{}
 \end{subfigure}
 \begin{subfigure}{.48\textwidth}
  \centerline{\includegraphics[width=\linewidth]{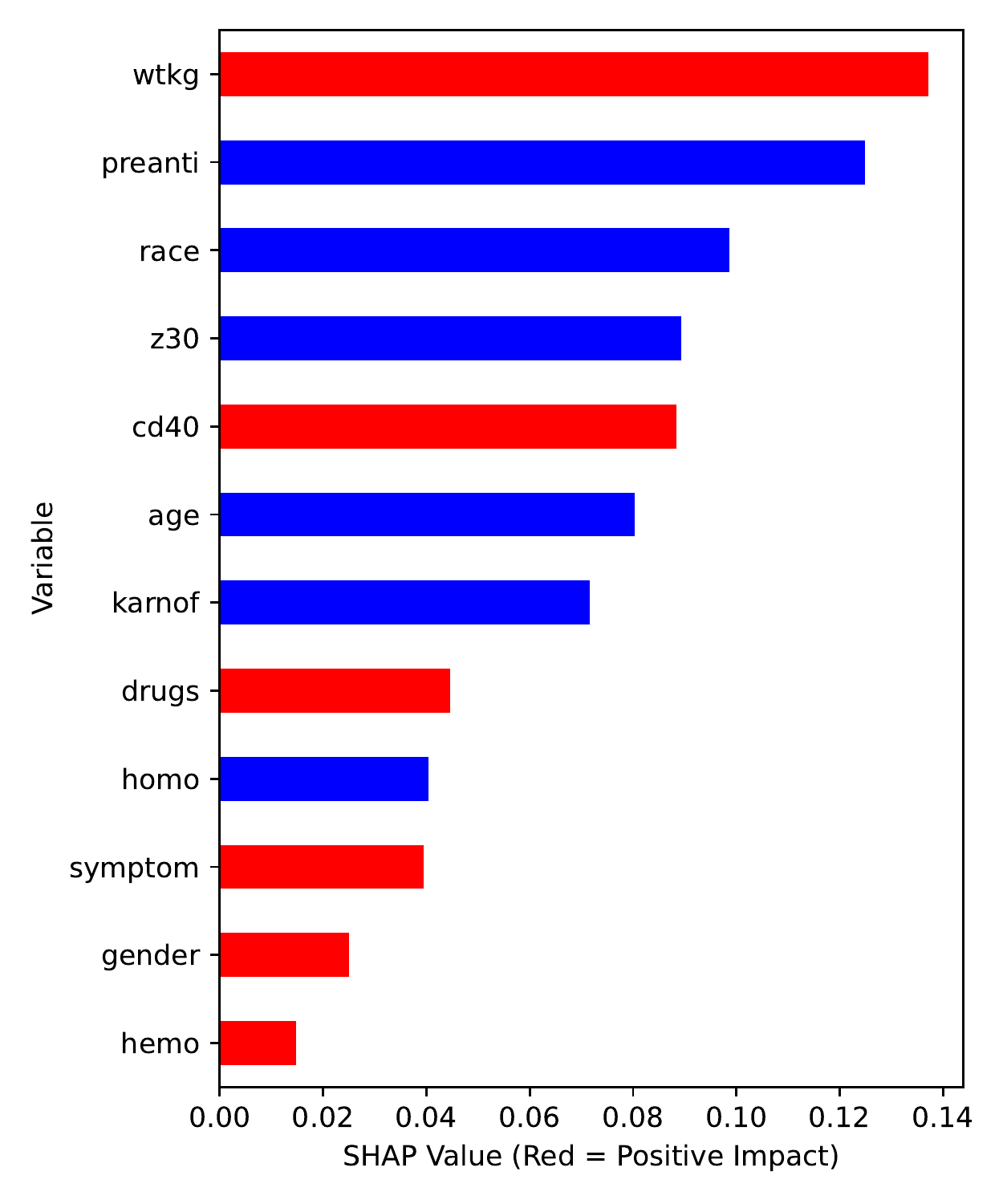}}
  \caption{}
 \end{subfigure}
 \caption{Visualization for the ACTG175 dataset: SHAP variable importance plots for decision rules RISE (a) and Exp (b), respectively. 
 Covariates ($X$) are ranked by variable importance in descending order. 
 Correlations between the feature and their SHAP value are highlighted in color. The red color means a feature is positively correlated with assigning treatment $A=1$ and the blue indicates a negative correlation. 
 } 
 \label{fig:actg_shap}
\end{figure}

\begin{figure}[!htb]
\centering
 \begin{subfigure}{0.48\textwidth}
  \centerline{\includegraphics[width=\linewidth]{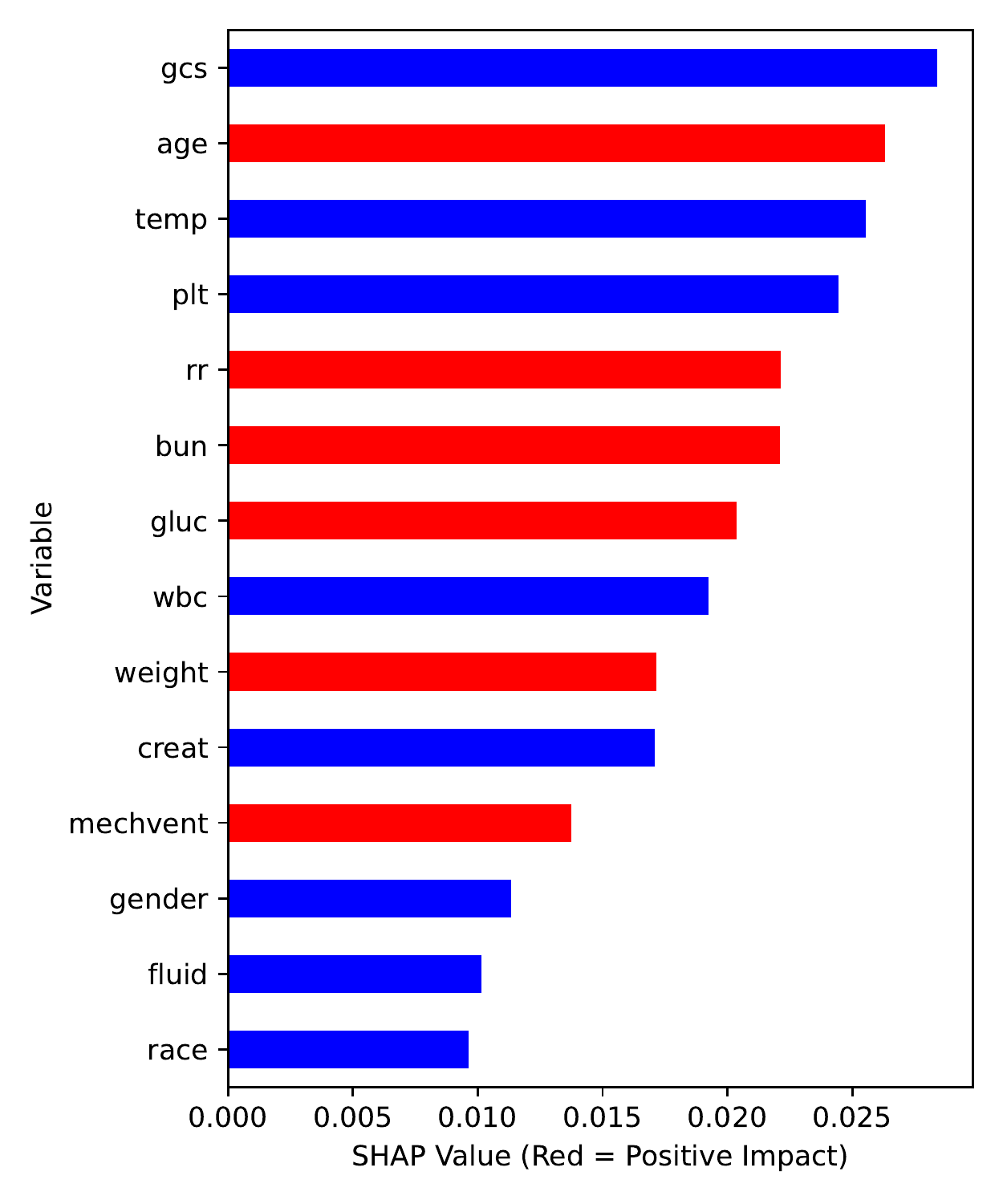}}
  \caption{}
 \end{subfigure}
 \begin{subfigure}{.48\textwidth}
  \centerline{\includegraphics[width=\linewidth]{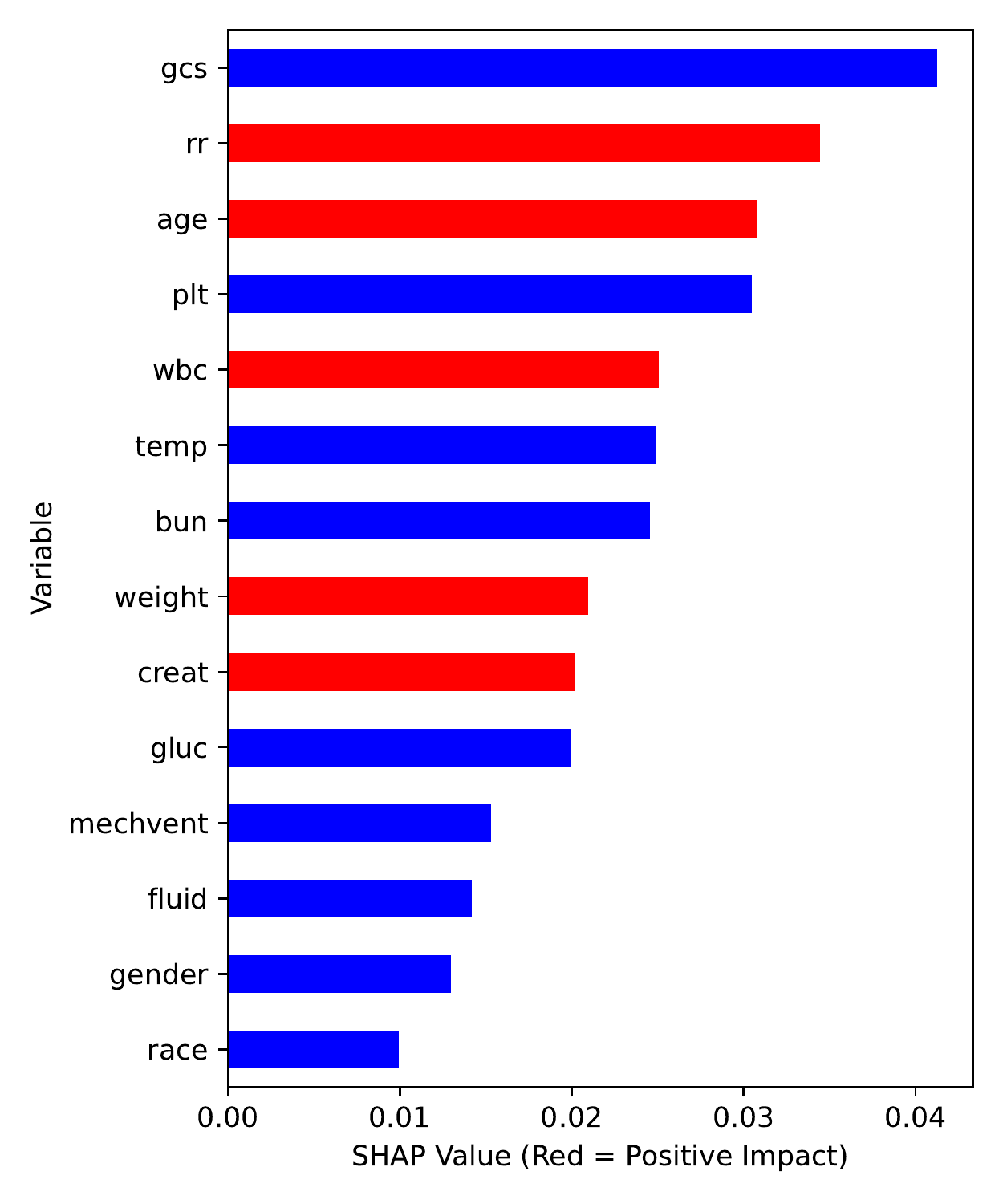}}
  \caption{}
 \end{subfigure}
 \caption{Visualization for the sepsis data: SHAP variable importance plots for decision rules RISE (a) and Exp (b), respectively. 
 Covariates ($X$) are ranked by variable importance in descending order. 
 Correlations between the feature and their SHAP value are highlighted in color. The red color means a feature is positively correlated with assigning treatment $A=1$ and the blue indicates a negative correlation. 
 } 
 \label{fig:sepsis_shap}
\end{figure}

\end{document}